\newcommand{\gotc}{{\mathfrak c}}
\newcommand{\R}{\mathbb{R}}
\newcommand{\Sp}{\mathbb{S}}
\newcommand{\Z}{\mathbb{Z}}
\newcommand{\NN}{\mathcal {N}}
\newcommand{\calH}{{\mathcal {H}}}
\newcommand{\calM}{{\mathcal {M}}}
\newcommand{\calP}{{\mathcal {P}}}
\newcommand{\calC}{{\mathcal {C}}}
\newcommand{\calL}{{\mathcal L}}
\newcommand{\spa}{\operatorname{span}}
\newcommand{\F}{\mathcal F}
\newcommand{\gotm}{{\mathfrak m}}
\newcommand{\ve}{\varepsilon}
\newcommand{\lan}{\langle}
\newcommand{\ran}{\rangle}
\newcommand{\calN}{{\mathcal N}}
\newcommand{\kk}{{\mathfrak {K}}}
\def\Koniec{\hbox to\hsize{\hfil$\diamond$}}
\def\1{\mathbf 1}
\newtheorem{theorem}{Theorem}[section]
\newtheorem{corollary}[theorem]{Corollary}
\newtheorem{lemma}[theorem]{Lemma}
\newtheorem{proposition}[theorem]{Proposition}
\theoremstyle{definition}
\newtheorem{definition}{Definition}[section]
\theoremstyle{remark}
\numberwithin{equation}{section}
\definecolor{blue-violet}{rgb}{0.54,0.17,0.89}
\definecolor{amethyst}{rgb}{0.6,0.4,0.8}
\definecolor{darkviolet}{rgb}{0.58, 0.0, 0.83}
\definecolor{darkgreen}{rgb}{0,.4,0}
\definecolor{mixedgreen}{rgb}{0.3,0.6,00}
\definecolor{bananayellow}{rgb}{1.0, 0.88, 0.21}
\definecolor{arylideyellow}{rgb}{0.91, 0.84, 0.42}
\definecolor{bananamania}{rgb}{0.98, 0.91, 0.71}
\newcommand{\REMOVEsk}[1]%
           {{\color{magenta}\sout{#1}}}
\begin{document}

\title{Mercer  Kernel Machines from a Ridge Function Perspective}

\author{Karol Dziedziul}
\address{
Faculty of Applied Mathematics,
The Gda\'nsk University of Technology,
ul. G. Narutowicza 11/12,
80-952 Gda\'nsk, Poland}

\email{karol.dziedziul@pg.edu.pl}

\author{Sergey Kryzhevich}
\address{
Faculty of Applied Mathematics,
The Gda\'nsk University of Technology,
ul. G. Narutowicza 11/12,
80-952 Gda\'nsk, Poland}

\email{sergey.kryzhevich@pg.edu.pl}

\author{Paweł Wieczyński}

\begin{abstract}
In order to present Mercer large-scale kernel machines from the perspective of ridge functions, we first recall the results of Lin and Pinkus from Fundamentality of Ridge Functions. We then revisit the main result of Rahimi and Recht’s paper Random Features for Large-Scale Kernel Machines (2008) from the standpoint of approximation theory. In particular, we study which kernels can be approximated by finite sums of products of cosine functions  and we identify the main obstacles inherent in this approach. Finally, we apply these results to image processing, using a one-vs-rest procedure, and to the setting of adaptive algorithms.

\end{abstract}

\maketitle

\begin{quote}
{\small {\bf Mathematics subject classifications:} 42A10, 68T07, 26B40}
\end{quote}

\begin{quote}
{\small {\bf Key words and phrases:}
approximation on compact sets; Deep Learning; image processing; Mercer kernels, ridge functions}
\end{quote}

\bigskip

\section{Introduction}

In the context of Reproducing Kernel Hilbert Spaces (RKHS), a \textit{kernel} is a function \( k: X \times X \to \mathbb{R} \) (or \(\mathbb{C}\)) that maps pairs of elements from a set \( X\subset \R^d \) into the real (or complex) numbers. Here, we consider only real-valued functions. The function must satisfy the following properties:

1. \textit{Symmetry}: \( k(x, y) = k(y, x) \) for all \( x, y \in X \).

2. \textit{Positive definiteness}: For any finite set of points \( \{x_1, x_2, \ldots, x_n\} \subset X \),  any vector \( \mathbf{c} = (\gotc_1, \gotc_2, \ldots, \gotc_n)^\top \in \mathbb{R}^n \),
   \[
   \mathbf{c}^\top G_n \mathbf{c} = \sum_{i=1}^n \sum_{j=1}^n \gotc_i \gotc_j k(x_i, x_j) \geq 0.
   \]
   In other words the Gram matrix \( G_n \) defined by \( G_n =[ k(x_i, x_j)]_{1\leq i,j\leq n} \)is semi-positive.
  In our approach, we mainly assume that  $X=\R^d$ or $X=[0,1]^d$. Generally we assume that $k$ is continuous. For first reading  we propose \cite{SVM}.
   
     If there exists a function $\kk$ such that  $k(x,y)=\kk(x-y)$ we say that the kernel $k$ is {\it shift invariant} or {\it translation invariant kernel.} In applications of machine learning, we encounter universal kernels, which are studied by Ch. A. Micchelli, Y. Xu, and H. Zhang \cite{Micc}. An example of such 
kernel is the Gaussian kernel, $k(x,y)=\kk_G(x-y)$ , where $\kk_G$ is
the multivariate Gaussian density function (up to the constant)  given by:
\[ \kk_G(x) \asymp  \exp\left(-\gamma {x}^\top {x}\right), x\in \R^d,
 \]
 where $\gamma>0$. 
 A universal kernel is a continuous function  for which, for any compact subset $K\subset \R^d$, the set
 $\spa \{k(\cdot,y) : y \in K\}$ is dense in $C(K)$ under the maximum norm.

In this paper, we introduce ridge kernels depending on 
 $\vartheta$  which includes both the set of parameters $(\{c_j,b_j,w_j\}_{j=1}^\gotm)$, $c_j>0$, $b_j\in\R$, $w_j\in \R^d$ and  consider  a set of functions $G\subset C(\R,\R)$ 
\begin{equation}
\label{RK}
k(x,y|\vartheta)=\sum_{j=1}^\gotm c_j g_j(\lan w_j,x\ran+b_j)g_j(\lan w_j,y\ran+b_j),\quad c_j\geq 0,
\end{equation}
such that $g_j\in G, j=1,\ldots,\gotm$. 
We pay a special attention to the case of the singleton set
$G=\{\sigma\}$, $\sigma\in C(\R)$
 \begin{equation}
\label{kRR}
k(x,y|\vartheta)=\sum_{j=1}^\gotm c_j \sigma(\lan w_j,x\ran+b_j)\sigma(\lan w_j,y\ran+b_j).
\end{equation}
In case $\sigma=\cos$ we get  Rahimi-Recht kernel \cite{Rahimi}.
It is evident that  for fixed $\vartheta$, $k$ is not a universal kernel. 
But if we stat to change $\vartheta$ we obtain a family of kernels. Why might we still need such kernels? Sometimes we require universal tools, but other times, we need particular tools  to do a better job, i.e. to approximate functions of some specific types. This motivates the following definition.
\begin{definition}\label{universalfamily}
We call a family $\F$ of kernels  {\it universal}  if, for any compact set $K\subset \R^d$ and any continuous function $f\in C(K)$, any $\varepsilon>0$, there exists $n\in {\mathbb N}$, $k\in \F$ and coefficients $\{a_j\}_{j=1}^n\subset  \R$ and $\{x_j\}_{j=1}^n\subset K$ such that
\[
\sup_{x\in K}|f(x)-\sum_{j=1}^n a_j k(x,x_j)|\leq \varepsilon.
\]
Members of $\F$ are called particular kernels. 
\end{definition}

Note  that in case $k$ is a universal kernel then $\F=\{k(x,y)\}$ is a universal family.
In Section \ref{universal} we prove among other results that family of kernels of form as \eqref{kRR}, i.e.
\[
\F=\{k(x,y|\vartheta):\vartheta=\{(c_j,b_j,w_j)_{j=1}^\gotm\}, c_j>0, b_j,w_j\in\R^d\}
\] is universal if and only if $\sigma$ is not a polynomial.
It is a  consequence of universal approximation theorem, We use arguments similar to ones from proof of \cite[Theorem 5.1]{Pinkus2}.
 To  bound the space of parameters, for instance to assure that $w_j\in \Sp^{d-1}$ we should to assume some additional assumption on  $\sigma$.

The contribution of this paper is as follows.   At first we examine the approximation properties of  ridge functions   $g_1(\lan w, x\ran)g_2(\lan w, y\ran)$ where $w\in \R^d, g_1,g_2\in C(\R,\R)$ in the topology of uniform convergence on compact sets.
This calls is larger than ridge kernels.
 We show that  
\[
\overline{\spa\,\{ g_1(\lan w, x\ran)g_2(\lan w, y\ran): w\in \R^d, g_1,g_2\in C(\R,\R) \}}\neq
C(\R^{2d},\R),
\]
see Theorem  \ref{l1.2}.  
In Theorem \ref{almost} we demonstrate the geometric structure of space of polynomials contained in
$\overline{\spa\,\{ g_1(\lan w, x\ran)g_2(\lan w, y\ran): w\in \R^d, g_1,g_2\in C(\R,\R) \}}$.
It is  progress comparing \cite{Pinkus1}.
On the over hand
  Rahimi and Recht show that Rahimi-Recht  kernels approximates any continuous , shift-invariant kernel
in the topology of uniform convergence on compact sets.   Rahimi and Recht result is proved in \cite{Rahimi} using Bochner's theorem.
Using Universal Approximation Theorem  we obtain 
parallel result Theorem \ref{th_appr} for ridge functions of form $g_1(\lan w, x\ran)g_2(\lan w, y\ran)$.
 
 Secondly part is connected with ridge kernels. From general perspective this approach ties neural networks  with machine learning.  We show that  kernels \eqref{kRR} form 
 universal family. 
 This motivates to first numerical simulation. 
We  show how to use particular kernels as a set of filters in the spirit of  "one-vs-rest".

Next, we connect machine learning with classical nonparametric estimation.
In the framework developed by Smale and Cucker, all computations are performed in a Hilbert space associated with a fixed Mercer kernel, namely the corresponding RKHS. However, the true regression function need not belong to this space. As a consequence, what is effectively estimated is not the regression function itself, but only its projection—or shadow—onto the RKHS. The defect function is typically estimated within this framework; see \cite{Cucker} (Theorem A and Theorem B).

The main limitation of this approach is that it provides only a surrogate of the regression function. In particular, the estimation error of the regression function itself is not directly controlled, but only the error of its representation within the RKHS. To overcome this difficulties there are straight  methods of estimation, introduced in \cite{Binev}. In paper a rate of estimation is optimal \cite{Dziedziul}. Both papers  invokes  classical methods of estimation.

By using ridge kernels, we aim to obtain a more accurate estimation of the regression function while remaining within the RKHS perspective. The motivation for this approach is rooted in classical nonparametric estimation theory, which provides optimal convergence rates for regression function estimation; see \cite{Stone1982}, \cite{Stone1985}.

More recent results on nonparametric estimation using neural networks \cite{Bagirov} and deep neural networks \cite{Kohler} are also highly relevant from our perspective. These results demonstrate that, despite the theoretical and algorithmic challenges, interpolation-based methods can still convey meaningful information about the underlying regression function.

Therefore, we propose to bridge these two approaches by transforming ridge kernels into an adaptive algorithm that progressively refines the effective hypothesis space.

\section*{\large{Approximation properties of ridge functions}}
\section{Preliminaries}
In this chapter, we rely on the results of \cite{Pinkus1}, in particular on Theorem 2.1. Unless stated otherwise, we use the notation of \cite{Pinkus1}. Let $\Omega$ be a subset of the set of real matrices of dimension $d\times \tau$. Consider the functional space
\[
\calM(\Omega)=\mbox{\rm span}\,\{ g(Az): A\in \Omega, g\in C(\R^d,\R)\}
\]
where $z\in \R^\tau$. Let $L(A)$ denote the span of rows of $A$ and
\[
L(\Omega)=\bigcup_{A\in \Omega} L(A).
\]
\begin{definition}
A sequence of function $\{f_j\}_{j=1}^\infty\subset C(\R^d)$ is said to converge compactly
to a function $f\in C(\R^d)$ is for every compact set 
$K\subset \R^d$ 
\[
\lim_{j\to \infty}\sup_{x\in K}|f_j(x)-f(x)|=0.
\]
 The 
convergence on compact sets in $C(\R^d)$ gives compact-open topology. We call the topology of uniform 
convergence on compact sets.
\end{definition}
\begin{theorem}\label{Pinkus}
The linear space  $\calM(\Omega)$ is dense in $C(\R^{\tau},\R)$ endowed with the topology of uniform
convergence on compact sets if and only if the only homogeneous polynomial of $\tau$ variables which vanishes identically on $L(\Omega)$ is the zero polynomial.
\end{theorem}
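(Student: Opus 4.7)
My plan is to reduce the density of $\calM(\Omega)$ in $C(\R^\tau,\R)$ to a purely algebraic statement about homogeneous polynomials, and then read that statement off from $L(\Omega)$ via the Fischer (apolar) inner product. Since polynomials are dense in $C(\R^\tau,\R)$ in the topology of uniform convergence on compacta (Weierstrass), $\calM(\Omega)$ is dense iff every homogeneous polynomial lies in $\ov{\calM(\Omega)}$. Because $V_n:=\calM(\Omega)\cap H_n$ is a finite-dimensional subspace of the degree-$n$ homogeneous polynomials $H_n$ on $\R^\tau$, it is closed in the compact-open topology, so this further reduces to $V_n=H_n$ for every $n$.

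First I would identify $V_n$ explicitly. Taking $g(y)=(\lambda\cdot y)^n$ with $\lambda\in\R^d$ gives $g(Az)=(A^T\lambda\cdot z)^n$, and $A^T\lambda$ sweeps out $L(A)$ as $\lambda$ varies. Polarization inside a single $L(A)$ produces every product $(\alpha_1\cdot z)\cdots(\alpha_n\cdot z)$ with $\alpha_i\in L(A)$, and these exhaust the degree-$n$ pieces of $g(Az)$ with $g$ homogeneous of degree $n$. Summing over $A\in\Omega$ yields
\[
V_n=\spa\{(\alpha\cdot z)^n:\alpha\in L(\Omega)\}.
\]
Using the Fischer inner product $\langle p,q\rangle:=p(D)q(0)$ on $H_n$, the reproducing identity $\langle p,(\alpha\cdot z)^n\rangle=n!\,p(\alpha)$ gives
\[
V_n^\perp=\{p\in H_n:p\equiv 0\text{ on }L(\Omega)\}.
\]
Hence $V_n=H_n$ iff no nonzero $p\in H_n$ vanishes on $L(\Omega)$. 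This settles the ``$\Leftarrow$'' direction at once: absence of such polynomials at every degree forces $V_n=H_n$ for all $n$, so $\calM(\Omega)$ already contains every polynomial and is dense by Weierstrass.

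For ``$\Rightarrow$'' I would argue contrapositively by producing a nonzero compactly supported signed measure annihilating $\calM(\Omega)$. Given $0\neq p\in H_n$ vanishing on $L(\Omega)$, pick $\psi\in C_c^\infty(\R^\tau)$ with $p(D)\psi\not\equiv 0$ (possible since $p\neq 0$) and set $d\mu:=p(D)\psi\,dz$, a nonzero measure with compact support. For Schwartz $g$ and $A\in\Omega$, integration by parts gives
\[
\int g(Az)\,d\mu(z)=\int p(-D)\bigl[g(Az)\bigr]\,\psi(z)\,dz,
\]
and Fourier inversion writes $p(-D)[g(Az)]$ as a superposition of $p(-iA^T\lambda)e^{i(A^T\lambda)\cdot z}$, each of which is zero because $A^T\lambda\in L(\Omega)$ and $p$ vanishes there. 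Approximating a general $g\in C(\R^d,\R)$ by Schwartz functions uniformly on the compact set $A(\supp\psi)$ extends the identity to all continuous $g$. Thus $\mu$ annihilates $\calM(\Omega)$, contradicting density via the Riesz representation theorem on $C(\supp\mu)$.

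The principal difficulty is this construction in ``$\Rightarrow$'': the vanishing polynomial $p$ naturally supplies only the point-supported distribution $p(D)\delta_0$, which is not a compactly supported measure. Mollifying it against $\psi$ turns it into a genuine compactly supported $C^\infty$ density whose Fourier transform $i^n p(\xi)\widehat\psi(\xi)$ still vanishes on $L(\Omega)$; verifying that this mollification preserves annihilation of \emph{all} ridge functions $g(Az)$ (not merely the exponentials) is the main analytic step, and together with the Fischer-pairing bookkeeping of step one it yields the equivalence.
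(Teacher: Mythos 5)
The paper offers no proof of this statement to compare against: it is imported verbatim as Theorem~2.1 of Lin and Pinkus \cite{Pinkus1} and simply cited. Judged on its own, your argument is correct and is essentially the classical Lin--Pinkus proof. The sufficiency direction via the Fischer pairing $\langle p,(\alpha\cdot z)^n\rangle=n!\,p(\alpha)$, which shows $\spa\{(\alpha\cdot z)^n:\alpha\in L(\Omega)\}=H_n$ precisely when no nonzero degree-$n$ homogeneous polynomial vanishes on $L(\Omega)$, and the necessity direction via the compactly supported annihilating measure $p(D)\psi\,dz$ are both the standard route. Your flagged ``main analytic step'' does go through: since $\partial_{z_j}[g(Az)]=\sum_k A_{kj}(\partial_k g)(Az)$, one has $p(D_z)[g(Az)]=\bigl[p(A^{T}\nabla)g\bigr](Az)$, and the operator $p(A^{T}\nabla)$ is identically zero because its symbol $p(A^{T}\xi)$ vanishes for every $\xi$ ($A^{T}\xi\in L(A)\subset L(\Omega)$); so the mollified distribution kills every smooth ridge function, and the extension to continuous $g$ by uniform approximation on $A(\supp\psi)$ is legitimate since $\mu$ has finite total variation. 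One imprecision worth noting: your opening reduction claims density is \emph{equivalent} to $V_n:=\calM(\Omega)\cap H_n$ equalling $H_n$ for all $n$, but the implication ``dense $\Rightarrow V_n=H_n$'' is not justified by finite-dimensionality of $V_n$ alone, since a homogeneous polynomial could a priori lie in $\ov{\calM(\Omega)}$ as a limit of non-polynomial elements without lying in $\calM(\Omega)\cap H_n$. This costs you nothing, because you never use that implication: the forward direction of the theorem is carried entirely by the annihilating measure, and for the converse you only need the inclusion $\spa\{(\alpha\cdot z)^n:\alpha\in L(\Omega)\}\subset\calM(\Omega)$, which holds.
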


Following \cite{Pinkus1}, we denote by $\calP_\Omega$ the set of all polynomials, vanishing identically on $L(\Omega)$, i.e.
\[
\calP_\Omega =\{p\in \Pi^{\tau} : p|_{L(\Omega)}=0 \},
\]
where $\Pi^{\tau}$ is the set of all algebraic polynomials of $\tau$ variables. By $\Pi^\tau_k\subset \Pi^\tau$, we denote the set of polynomials of total degree $k$. 

According to \cite[Theorem 4.1]{Pinkus1}, in order to find the set $\overline{\calM(\Omega)}$, it is sufficient to find the polynomials $q$ such that for all polynomials $p$ vanishing on $L(\Omega)$ we have 
\begin{equation}
\label{poly}
p(D) q=0.
\end{equation}
Here $D$ is the gradient vector; for any multiindex $m=(m_1,\ldots,m_\tau)$ we can define $$D^m=\dfrac{\partial^{|m|}}{\partial z^m}.$$
Speaking of the closure of a set of functions (for instance, $\overline{\calM(\Omega)}$), we always mean the topology of uniform convergence on compact sets. Introduce the set of all polynomials satisfying \eqref{poly}:
\[
\calC_1=\mbox{\rm span}\,\{q\in \Pi^\tau: p(D) q=0 \quad \hbox{for all}\quad p\in \calP_\Omega\}.
\]
Moreover, let
\[
\calC_2=\mbox{\rm span}\,\{ q(z)=(\lan z, a\ran)^l:  l\in \Z_+, z\in \R^{\tau}, a\in \mathcal N \}
\]
where
\[
 \NN:=\mathrm{ker}\, \calP_\Omega:=\bigcap_{p\in \calP_\Omega} \mathrm{ker}\, p.
\]
Let 
\[
\calC_3=\mbox{\rm span}\,\{ f(\lan z, a\ran):  f\in C(\R,\R), a\in \mathcal N \}
\]
Now we can reformulate \cite[Theorem 4.1]{Pinkus1}.

\begin{theorem}\label{Momega}
In the topology of uniform convergence on compact sets, we have
\[
\overline{\calM(\Omega)}=\overline{\calC_1}=\overline{\calC_2}=\overline{\calC_3}.
\]
\end{theorem}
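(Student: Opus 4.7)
The plan is to lean directly on \cite[Theorem 4.1]{Pinkus1} for the equality $\overline{\calM(\Omega)}=\overline{\calC_1}$, which reduces the proof to two further identities: the polynomial identity $\calC_1=\calC_2$ and the density identity $\overline{\calC_2}=\overline{\calC_3}$. Two preliminary remarks streamline the argument. First, since $L(\Omega)=\bigcup_{A\in\Omega}L(A)$ is a union of linear subspaces and hence a cone, every $p\in\calP_\Omega$ decomposes into homogeneous components each separately in $\calP_\Omega$; thus $\calP_\Omega$ is a graded ideal and both $\calC_1$ and $\calC_2$ are graded subspaces of $\Pi^\tau$. Second, from the definitions one has $L(\Omega)\subseteq\NN$, and therefore a polynomial lies in $\calP_\Omega$ if and only if it vanishes identically on $\NN$.

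The equality $\overline{\calC_2}=\overline{\calC_3}$ is the easy step. The inclusion $\calC_2\subseteq\calC_3$ is immediate on taking $f(t)=t^l$. For the reverse inclusion, fix $a\in\NN$, $f\in C(\R,\R)$ and a compact $K\subset\R^\tau$; the image $\{\lan z,a\ran:z\in K\}$ is a compact subset of $\R$, so by Weierstrass there are polynomials $P_n(t)=\sum_l\alpha^{(n)}_l t^l$ with $P_n\to f$ uniformly on it, and then $\sum_l\alpha^{(n)}_l\lan z,a\ran^l\in\calC_2$ converges uniformly to $f(\lan z,a\ran)$ on $K$.

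The core step is the polynomial identity $\calC_1\cap\calH_k=\calC_2\cap\calH_k$ for every $k\geq 0$, where $\calH_k$ denotes the space of homogeneous polynomials of degree $k$. On $\calH_k$ I introduce the Fischer bilinear form $(p,q)_F:=(p(D)q)(0)$, which is positive definite since $(z^\alpha,z^\beta)_F=\alpha!\,\delta_{\alpha\beta}$ on monomials, together with the identity
\[
(\lan z,a\ran^k,\,q)_F=k!\,q(a),\qquad a\in\R^\tau,\ q\in\calH_k.
\]
Setting $V_k:=\spa\{\lan z,a\ran^k:a\in\NN\}=\calC_2\cap\calH_k$, this identity combined with $\calP_\Omega=I(\NN)$ gives the $(\cdot,\cdot)_F$-orthogonal decomposition $\calH_k=V_k\oplus(\calP_\Omega\cap\calH_k)$. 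The inclusion $V_k\subseteq\calC_1$ follows from the direct computation
\[
p(D)\lan z,a\ran^k=\frac{k!}{(k-j)!}\,p(a)\,\lan z,a\ran^{k-j},
\]
valid for homogeneous $p$ of degree $j\leq k$, which vanishes whenever $a\in\NN$. For the reverse, given $q\in\calC_1\cap\calH_k$, decompose $q=v+w$ with $v\in V_k$ and $w\in\calP_\Omega\cap\calH_k$; since $v\in\calC_1$, so is $w=q-v$, and applying the defining condition of $\calC_1$ to the test polynomial $p=w\in\calP_\Omega$ yields $(w,w)_F=(w(D)w)(0)=0$, forcing $w=0$.

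The main obstacle I anticipate is keeping the graded bookkeeping straight in the last step, so that the decomposition $q=v+w$ lives inside a single homogeneous piece $\calH_k$ and the positive definiteness of the Fischer form can be applied. Once the polynomial identity $\calC_1=\calC_2$ is in place, passing to closures yields $\overline{\calC_1}=\overline{\calC_2}$, which together with the Weierstrass step and Pinkus's result establishes the full chain of equalities in the statement.
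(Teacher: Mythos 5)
Your proposal is correct, and it is more self-contained than the paper's own proof, which is essentially a two-line citation: the authors get $\overline{\calC_2}=\overline{\calC_3}$ from Weierstrass exactly as you do, and then simply quote the proof of Lin--Pinkus \cite[Theorem 4.1]{Pinkus1} for the chain $\overline{\calM(\Omega)}=\overline{\calC_1}=\overline{\calC_3}$. You instead cite Pinkus only for the genuinely hard density statement $\overline{\calM(\Omega)}=\overline{\calC_1}$ and then reconstruct the remaining link by proving the exact (unclosed) polynomial identity $\calC_1=\calC_2$ via the Fischer/apolarity pairing $(p,q)_F=(p(D)q)(0)$ on each $\calH_k$. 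Your key steps all check out: $\calP_\Omega$ coincides with the ideal of polynomials vanishing on $\NN$ (tautologically, since $L(\Omega)\subseteq\NN$ and $\NN$ is by definition the common zero set of $\calP_\Omega$); the reproducing identity $(\lan z,a\ran^k,q)_F=k!\,q(a)$ gives the orthogonal decomposition $\calH_k=V_k\oplus(\calP_\Omega\cap\calH_k)$; the differentiation formula shows $V_k\subseteq\calC_1$ once one uses that $\calP_\Omega$ is graded (which follows from $L(\Omega)$ being a cone, and is in effect the paper's Lemma \ref{NS}); and the trick of testing $w\in\calC_1\cap\calP_\Omega\cap\calH_k$ against itself to get $(w,w)_F=0$ closes the loop. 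What your route buys is a stronger intermediate statement ($\calC_1=\calC_2$ as graded subspaces of $\Pi^\tau$, not merely equality of closures) and independence from the internal details of the Lin--Pinkus proof; what it costs is reproving machinery that the paper is content to import wholesale. Either way the load-bearing ingredient, $\overline{\calM(\Omega)}=\overline{\calC_1}$, remains a citation in both treatments.
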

\begin{proof} The Weierstrass Theorem implies that
$\overline{\calC_2}= \overline{\calC_3}$.
It follows from the proof of \cite[Theorem 4.1]{Pinkus1} that 
\[
\overline{\calM(\Omega)}=\overline{\calC_1}=\overline{\calC_3}.
\]
\end{proof}
Consider a specific set $\Omega_S$ of $2\times 2d$ matrices of the form
\begin{equation}\label{matrixa}
A=\left[
\begin{matrix}
w & 0\cr
0 & -w
\end{matrix}
\right],\qquad 0,w\in \R^d.
\end{equation}
We will show that
\[
\calM(\Omega_S)=  \spa\,\{ g(Az): A\in \Omega_S, \quad g\in C(\R^2,\R) \},
\]
is not dense in $C(\R^{2d},\R)$ endowed with the topology of uniform convergence on compact sets.

Let the subspace $L(A)\subset \R^{2d}$ be the span of two rows of $A$ and, as we defined above,
\begin{equation}
\label{def1}
L(\Omega_S):=\bigcup_{A\in \Omega_S} L(A).
\end{equation}
Denote
\begin{equation}
\label{def2}
 \NN_S=\mathrm{ker}\, \calP_{\Omega_S}:=\bigcap_{p\in \calP_{\Omega_S}} \mathrm{ker}\, p,
\end{equation}
where
\begin{equation}
\label{def3}
\calP_{\Omega_S} =\{p\in \Pi^{2d} : p|_{L(\Omega_S)}=0 \}
\end{equation}
and $\Pi^{2d}$ is a space of polynomials of $2d$ variables.

\section{Polynomials vanishing on $L(\Omega_S)$}

Following the approach proposed by Lin and Pinkus \cite{Pinkus1}, we demonstrate that the closure $\overline{\calM(\Omega_S)}$ is not $C(\R^2,\R)$.

Let $H^{2d}_k$ denote the set of homogeneous polynomials of $2d$ variables of total degree $k$ i.e.
\[
H^{2d}_k=\left\{ \sum_{|m|=k} a_m z^m, \quad z\in \R^{2d}, m\in \Z^{2d}_+\right\},
\]
where $z^m=z_1^{m_1}\cdots z_{2d}^{m_{2d}}$ and $|m|=m_1+\cdots+m_{2d}=k$. Let
\[
\calH_{\Omega_S,k} =\{p\in H^{2d}_k : p|_{L(\Omega_S)}=0 \}.
\]
For any $s\in \Z_+^d$, $|s|=k$ and $0\leq l\leq k$ we define the set
\[
\Delta_{s,l}=\big\{\overline{m} \in [0,s_1]\times \cdots \times [0,s_n]\cap \Z_+^d: |\overline{m}| =l\big\}.
\]
For any $0<l<k$, $|s|=k$ we define a subspace of $H^{2d}_k$
\[
\calH(\Delta_{s,l})=\{p\in H^{2d}_k : p=P_{s,l} \},
\]
where for $z=(x,y)$, $x,y\in \R^d$, $m=(\overline{m},s-\overline{m})\in \Z_+^{2d}$ and $\overline{m}\in \Z_+^d$
\[
P_{s,l}(x,y)= \sum_{\overline{m}\in \Delta_{s,l}} a_{\overline{m},s-\overline{m}} x^{\overline{m}}
y^{s-\overline{m}}, 
\]
and the coefficients of polynomial $P_{s,l}$ are such that  
\[
\sum_{\overline{m}\in \Delta_{s,l}} a_{\overline{m},s-\overline{m}}=0.
\]

\begin{theorem}\label{l1.2}
Let $\Omega_S$ be a family of $2\times 2d$ matrices of the form \eqref{matrixa}, $d>1$. Then 
\begin{equation}
\label{equality}
\bigcup_{|s|=k,0<l<k} \calH(\Delta_{s,l}) = \calH_{\Omega_S,k}
\end{equation}
Consequently, $\calM(\Omega_S)$ is not dense in $C(\R^{2d},\R)$ with the topology of uniform convergence on compact sets.
\end{theorem}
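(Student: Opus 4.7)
The plan is to compute $L(\Omega_S)$ explicitly, then pull back a generic homogeneous polynomial of degree $k$ along a parametrization of $L(\Omega_S)$, and read off the vanishing conditions as conditions on coefficient sums indexed by $(s,l)$. The rows of the matrix $A$ in \eqref{matrixa} are $(w,0)$ and $(0,-w)$ with $w\in\R^n$, so $L(A)=\{(aw,bw):a,b\in\R\}$ and hence
\[
L(\Omega_S)=\{(x,y)\in\R^n\times\R^n : x \text{ and } y \text{ are linearly dependent in } \R^n\}.
\]

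Writing a homogeneous degree-$k$ polynomial as $p(x,y)=\sum_{m_1,m_2} a_{m_1,m_2}\,x^{m_1}y^{m_2}$ and substituting $(x,y)=(aw,bw)$, I would regroup the resulting sum by $s=m_1+m_2$ (with $|s|=k$) and by $l=|m_1|$ (with $0\le l\le k$):
\[
p(aw,bw)=\sum_{|s|=k}\sum_{l=0}^{k} a^{l}b^{k-l}\,w^{s}\,C_{s,l}(p),\qquad C_{s,l}(p)=\sum_{\overline m\in\Delta_{s,l}} a_{\overline m,\,s-\overline m}.
\]
The key observation is that the family $\{a^{l}b^{k-l}w^{s} : |s|=k,\,0\le l\le k\}$ is linearly independent as functions of $(a,b,w)\in\R\times\R\times\R^{n}$; this follows by fixing $w$ and using linear independence of $\{a^{l}b^{k-l}\}_{l=0}^{k}$ in $(a,b)$, and then using linear independence of $\{w^{s}\}_{|s|=k}$ in $w$. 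Consequently $p|_{L(\Omega_S)}=0$ if and only if $C_{s,l}(p)=0$ for every $|s|=k$ and every $0\le l\le k$.

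This coefficient-sum characterization identifies $\calH_{\Omega_S,k}$ with the (direct) sum of the subspaces $\calH(\Delta_{s,l})$ appearing on the left of \eqref{equality}: for $0<l<k$ the condition $C_{s,l}(p)=0$ is exactly the defining condition of $\calH(\Delta_{s,l})$, while for $l\in\{0,k\}$ the index set $\Delta_{s,l}$ is a singleton, so $C_{s,l}=0$ forces the entire $(s,l)$-block to vanish and those endpoint indices contribute nothing to the union.

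For the density consequence I would invoke Theorem \ref{Pinkus}, which reduces the failure of density of $\calM(\Omega_S)$ to exhibiting a single nonzero homogeneous polynomial in some $\calH_{\Omega_S,k}$. For $n>1$ an explicit witness appears already at $k=2$: taking $s=e_1+e_2$ and $l=1$, we have $\Delta_{s,l}=\{e_1,e_2\}$, and the polynomial $p(x,y)=x_1y_2-x_2y_1$ satisfies $C_{s,1}(p)=1+(-1)=0$, so $p\in\calH_{\Omega_S,2}\setminus\{0\}$. Geometrically $p$ is a $2\times 2$ minor of the matrix with columns $x$ and $y$, which vanishes precisely on linearly dependent pairs, serving as a sanity check. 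I expect the main obstacle to be the clean bookkeeping of the double sum after the substitution, and the linear-independence argument isolating the $(s,l)$-blocks; once that is in place, the rest is formal.
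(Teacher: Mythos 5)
Your proposal is correct and follows essentially the same route as the paper: parametrize $L(\Omega_S)$ by $(x,y)=(\alpha w,\beta w)$, substitute into a generic homogeneous polynomial, regroup the monomials by $s=\overline m+\underline m$ and $l=|\overline m|$, and use linear independence of the functions $\alpha^l\beta^{k-l}w^s$ to reduce vanishing on $L(\Omega_S)$ to the coefficient-sum conditions $C_{s,l}(p)=0$, with non-density following from the existence of indices with $\#\Delta_{s,l}>1$. Your explicit witness $x_1y_2-x_2y_1$ and the spelled-out independence argument are slightly more detailed than the paper's treatment, but the substance is identical.
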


\begin{proof}

Let $P$ be any homogeneous polynomial of $2d$ variables of total degree $k$, i.e.
for $m=(\overline{m},\underline{m})$, $\overline{m},\underline{m} \in \Z_+^d$, $x,y\in \R^d$, $z=(x,y)$
\[
P(z)=P(x,y)=\sum_{|m|=k} a_m x^{\overline{m}} y^{\underline{m}}=\sum_{|m|=k} a_m z^m.
\]
Here $\Z_+$ stands for the set of all non-negative integers. We use the multivariate notation. 
Let $A\in \Omega_S$. Then 
\[
L(A)=\{(x,y)=\alpha (w,0)+\beta (0,w), \alpha,-\beta\in \R \}.
\]
Hence for any vector $(x,y)\in L(A)$,
\[
P(x,y)=\sum_{|m|=k} a_m
 x^{\overline{m}} y^{\underline{m}}=\sum_{|m|=k} a_m \alpha^{|\overline{m}|} \beta^{|\underline{m}|} w^{\overline{m}+\underline{m}}.
\]
A linear operator $E:\Z^{2d} \to \Z^{2d}$
\begin{equation}
\label{linear}
E(m)=E(\overline{m},\underline{m})=(\overline{m},\overline{m}+\underline{m})=(\overline{m},s), \quad s(m)=s=\overline{m}+\underline{m}
\end{equation}
is isomorphism. Let $B=E^{-1}$. Hence $B(\overline{m},s)= (\overline{m},s-\overline{m})$ then, using $\overline{m}=(m_1,\ldots,m_d)$, $(x,y)=\alpha (w,0)+\beta (0,w)$
\[
P(x,y)=\sum_{|s|=k,s\in \Z_+^d} \sum_{m_1=0}^{s_1}\cdots \sum_{m_n=0}^{s_d}
a_{B(\overline{m},s)}\alpha^{|\overline{m}|} \beta^{k-|\overline{m}|} w^{s}.
\]
Observe that $\# \Delta_{s,l}=1$ if $|s|=l$ or $l=0$ or there is only one non-zero entry $s_j$. In all other cases, $\# \Delta_{s,l} >1$. Here $\#$ stands for the cardinality of a set. Then, for all $w\in {\R^d}$, $\alpha,\beta\in \R$,
$(x,y)=\alpha (w,0)+\beta (0,w)$,
\begin{equation}
\label{cond0}
P(x,y)=\sum_{|s|=k,s\in \Z_+^d} \sum_{l=0}^k \sum_{\overline{m}\in \Delta_{s,l}}
a_{B(\overline{m},s)}\alpha^{l} \beta^{k-l} w^{s}=0
\end{equation}
if and only if for all $s,l\leq |s|$
\begin{equation}
\label{cond}
\sum_{\overline{m}\in \Delta_{s,l}}
a_{B(\overline{m},s)}=0.
\end{equation}
This is $(k+1)\binom{d+k-1}{k}$ equations. Observe that $\binom{d+k-1}{k}$  is a dimension of space of homogeneous polynomials of $d$ variables of degree $k$. We see that if $\#\Delta_{s,l}>1$ then there exist non-zero polynomials that vanish $L(\Omega_S)$. 

Taking to account Eq.\,\eqref{cond}, we see that both
$\calH_{\Omega,k} $ and $\calH(\Delta_{s,l})$ are linear spaces, $\calH(\Delta_{s,l}) \subset
\calH_{\Omega,k}$. If $\#\Delta_{s,l}>1$ then $\dim(\calH(\Delta_{s,l}))=\#\Delta_{s,l}-1$. To see this, fix one coefficient $a_{\overline{m_0},s-\overline{m_0}}=-1$ and change the others to zero, but one equals one.
The equality \eqref{equality} is a consequence of an equivalence of \eqref{cond0} and \eqref{cond}.
Now a conclusion that 
$\calM(\Omega_S)$ is not dense in $C(\R^{2n},\R)$ with the topology of uniform convergence on compact sets follows from Theorem \ref{Pinkus}. Indeed we showed that exists
a non zero homogeneous polynomial of $\tau=2n$ variables which vanishes identically on $L(\Omega_S)$. 
\end{proof}

\section{Characterization of polynomials in $\overline{\calM(\Omega_S)}$}

Let $H^{2d}$ be a set of all  homogeneous polynomials of $2d$ variables and
\[
\calH_{\Omega_S} =\{p\in H^{2d} : p|_{L(\Omega_S)}=0 \},
\]
\[
\calP_{\Omega_S} =\{p\in \Pi^{2d} : p|_{L(\Omega_S)}=0 \},
\]
\[
 \NN_S=\mathrm{ker}\, \calP_{\Omega_S}:=\bigcap_{p\in \calP_{\Omega_S}} \mathrm{ker}\, p.
\]

\begin{lemma}\label{NS}
Then
\[
\NN_S=\mathrm{ker}\, \calH_{\Omega_S}:=\bigcap_{p\in \calH_{\Omega_S}} \mathrm{ker}\, p.
\]
\end{lemma}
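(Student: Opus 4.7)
The plan is to prove the two inclusions of the set equality separately. The inclusion $\NN_S \supset \ker \calH_{\Omega_S}$ is immediate from $\calH_{\Omega_S} \subset \calP_{\Omega_S}$, so all the work is in the reverse inclusion. The key structural fact I would exploit is that $L(\Omega_S)$ is a cone: each $L(A)$ for $A \in \Omega_S$ is a linear subspace, and the union is closed under scalar multiplication, so $t v \in L(\Omega_S)$ whenever $v \in L(\Omega_S)$ and $t \in \R$.

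First I would take an arbitrary $p \in \calP_{\Omega_S}$ and decompose it into its homogeneous components $p = \sum_{k=0}^{N} p_k$ with $p_k \in H^{2n}_k$. Given any $v \in L(\Omega_S)$, the cone property implies $tv \in L(\Omega_S)$ for every $t \in \R$, hence
\[
0 = p(tv) = \sum_{k=0}^{N} t^k p_k(v).
\]
This is a polynomial identity in $t$, so every coefficient must vanish: $p_k(v) = 0$ for each $k$. Since $v \in L(\Omega_S)$ was arbitrary, each $p_k$ lies in $\calH_{\Omega_S}$.

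Now suppose $z \in \ker \calH_{\Omega_S}$, i.e., $q(z) = 0$ for every $q \in \calH_{\Omega_S}$. Applying this to each homogeneous piece of $p$ gives $p_k(z) = 0$ for all $k$, hence $p(z) = \sum_k p_k(z) = 0$. Since $p \in \calP_{\Omega_S}$ was arbitrary, $z \in \NN_S$, which is the desired inclusion.

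There is no real obstacle here: the entire argument rests on the single observation that $L(\Omega_S)$ is a cone, which lets one split a vanishing inhomogeneous polynomial into homogeneous pieces that each vanish on the same set. The only care needed is to be explicit that the $p_k$ land in $\calH_{\Omega_S}$ (rather than just in $\calP_{\Omega_S}$), which is exactly what the cone argument delivers.
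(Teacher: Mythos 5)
Your proof is correct and follows essentially the same route as the paper: decompose an arbitrary $p\in\calP_{\Omega_S}$ into homogeneous components, use that $L(\Omega_S)$ is closed under scalar multiplication to conclude each component lies in $\calH_{\Omega_S}$, and then deduce $\mathrm{ker}\,\calH_{\Omega_S}\subset\NN_S$. Your write-up is in fact slightly more careful than the paper's in stating explicitly that the polynomial identity in $t$ forces each coefficient to vanish.
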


\begin{proof}
Inclusion $\NN_S\subset \hbox{ker}\, \calH_{\Omega_S}$ is obvious. Let $q\in \calP_{\Omega_S}$,
\[
q(z)=\sum_{|m|\leq \rho} a_m z^m=\sum_{k=0}^\rho q_k(z),\qquad z\in \R^{2d},
\]
where $q_k$ is a homogeneous polynomial
\[
q_k(z)=\sum_{|m|=k} a_m z^m.
\]
Then for all $t\in \R$ and all $z\in L(\Omega_S)$, $tz\in L(\Omega_S)$
\[
q(tz)=\sum_{k=0}^\rho t^k q_k(z)=0.
\]
Hence for all $k=0,\ldots,\rho$   
\[
q_k(z)=0, z\in L(\Omega_S),\qquad q_k\in \calH_{\Omega_S}.
\]
 Hence
 \[
  \bigcap_{k=1}^\rho \hbox{ker}\, q_k= \hbox{ker}\, q.
 \]
 This statement finishes the proof. 
\end{proof}

\begin{theorem}\label{almost}
Let $\Omega_S$ be a family of $2\times 2d$ matrices of the form \eqref{matrixa}, $d>1$. Then
\[
L(\Omega_S)=\overline{L(\Omega_S)}=\NN_S.
\]
\end{theorem}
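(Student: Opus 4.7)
My strategy is to establish the sandwich
\[
L(\Omega_S) \subseteq \overline{L(\Omega_S)} \subseteq \NN_S \subseteq L(\Omega_S),
\]
which collapses the three sets. The first inclusion is trivial. The second holds because $\NN_S$ is closed (it is an intersection of zero sets of continuous polynomials) and already contains $L(\Omega_S)$ by the definition of $\calP_{\Omega_S}$. Thus the entire substance of the theorem lies in proving $\NN_S \subseteq L(\Omega_S)$.

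The first step is to describe $L(\Omega_S)$ explicitly. For $A \in \Omega_S$ with block $w \in \R^n$, the two rows are $(w,0)$ and $(0,-w)$, so $L(A)=\{(\alpha w, \beta w):\alpha,\beta\in\R\}$ (the sign is absorbed since $\beta\mapsto -\beta$ is a bijection on $\R$). Taking the union over $w$ gives
\[
L(\Omega_S) = \bigl\{(x,y)\in \R^n\times\R^n : x \text{ and } y \text{ are linearly dependent}\bigr\}.
\]

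The second step is to produce members of $\calP_{\Omega_S}$ that already cut out this set. The natural candidates are the $2\times 2$ minors
\[
p_{ij}(x,y) := x_i y_j - x_j y_i, \qquad 1\le i<j\le n,
\]
which vanish on any $(\alpha w,\beta w)$ since $w_i w_j - w_j w_i = 0$; hence $p_{ij}\in\calP_{\Omega_S}$. (They are moreover exactly the polynomials $P_{s,1}$ from Theorem~\ref{l1.2} corresponding to $s=e_i+e_j$, so membership in $\calH_{\Omega_S,2}$ also follows from that theorem.) For any $(x,y)\in\NN_S$, all these minors of the $n\times 2$ matrix $[x\,|\,y]$ vanish, forcing it to have rank at most one, which is precisely the linear-dependence of $x$ and $y$. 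Hence $(x,y)\in L(\Omega_S)$.

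No serious obstacle is expected here, because Theorem~\ref{l1.2} gives us far more about $\calP_{\Omega_S}$ than is strictly needed: only its quadratic part, namely the determinantal relations, is used. The single place requiring mild care is the sign bookkeeping caused by the $-w$ in the second row of $A$, which is purely cosmetic. Worth emphasizing is that no real Nullstellensatz machinery is invoked: the determinantal ideal generated by $\{p_{ij}\}$ happens to define $L(\Omega_S)$ already as a set over $\R$, and that is enough.
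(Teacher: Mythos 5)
Your proof is correct, but it takes a genuinely different and noticeably shorter route than the paper's. The paper first reduces to homogeneous polynomials (Lemma~\ref{NS}), then proves $\NN_S\subset L(\Omega_S)$ by a three-step case analysis: it tests against the polynomials $b_{s,l,\kappa,m}(x,y)=x^{\kappa}y^{s-\kappa}-x^{m}y^{s-m}$ of various degrees, first treating the case where all $x_i\neq 0$ (Step 1, which in fact uses exactly your minors $p_{1j}$), then extracting the proportionality $x_i=ty_i$ (Step 2), and finally handling vanishing coordinates with specially chosen higher-degree test polynomials (Step 3). You instead observe that the full set of $2\times 2$ minors $p_{ij}(x,y)=x_iy_j-x_jy_i$ already lies in $\calP_{\Omega_S}$, and that their common zero set is, by the standard rank characterization, exactly the set of pairs $(x,y)$ with $x,y$ linearly dependent, which is $L(\Omega_S)$. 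This buys you two things: the case analysis on which coordinates of $x$ vanish disappears entirely (the rank argument handles $x=0$, $x$ with some zero entries, and $x$ with no zero entries uniformly), and only the degree-$2$ part of $\calP_{\Omega_S}$ is ever needed. What the paper's longer argument buys in exchange is an explicit link to the spaces $\calH(\Delta_{s,l})$ classified in Theorem~\ref{l1.2}, exhibiting concretely which of those test polynomials identify $\NN_S$; your remark that the $p_{ij}$ are the $P_{s,1}$ with $s=e_i+e_j$ recovers that link for the part you actually use. Your identification of $L(\Omega_S)$ with the rank-$\le 1$ locus, including the absorption of the sign coming from the row $(0,-w)$, is accurate, and the reduction of the theorem to the single inclusion $\NN_S\subseteq L(\Omega_S)$ matches the paper's inclusion chain \eqref{lns}.
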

\begin{proof}
It follows from definitions \eqref{def1}-\eqref{def3} that
\begin{equation}\label{lns}
L(\Omega_S) \subset \overline{L(\Omega_S)}\subset \NN_S.
\end{equation}
By Lemma \ref{NS}, definition of $\calH_{\Omega_S,k}$ and Theorem \ref{l1.2}, we obtain
\[
\NN_S=\mathrm{ker}\,\calH_{\Omega_S}\subset \bigcap_{|s|=k, 0<l<k} \mathrm{ker}\, \calH(\Delta_{s,l})=\mathrm{ker}\,\calH_{\Omega_S,k} 
\]
Now it suffices to prove that for sets $\calH(\Delta_{s,l})$ large enough
\[
\bigcap \mathrm{ker}\, \calH(\Delta_{s,l}) \subset  L(\Omega_S).
\]
Here the polynomials identifying $\NN_S$ are given below in Step 1- Step 3.

Let us fix $s\in \Z_+^d$, $|s|=k$. Let $\#\Delta_{s,l}>1$. From Theorem \ref{l1.2}, $\calH(\Delta_{s,l}) \subset\calH_{\Omega_S,k}$. 
For a fixed multiindex $\kappa \in \Delta_{s,l}$ the test set of polynomials is constructed as follows: $b_{s,l,\kappa,m}$ is for
\[\begin{array}{c}
m=(m_1,m_2,\ldots, m_d)\neq \kappa=(\kappa_1,\ldots,\kappa_d), \quad |m|=l, m\in \Delta_{s,l}, 
\\ s=(s_1,s_2,\ldots, s_d), |s|=k, \quad  \quad \mbox{for all} \quad 0\le j\le d,
\end{array}\]
\[
b_{s,l,\kappa,m}(x,y)=x^{\kappa}y^{s-\kappa}- x^{m}y^{s-m},
\]
where
\[
x=(x_1,\ldots,x_d),\quad y=(y_1,\ldots,y_d).
\]
Let $(x,y)\in \NN_S$. Then $b_{s,l,\kappa,m}(x,y)=0$ for all $ b_{s,l,\kappa,m} \in \calH(\Delta_{s,l}) $
if and only if for all $s$, $m$ as above
\begin{equation}\label{bslm}
x_1^{\kappa_1} x_2^{\kappa_2}\cdots x_d^{\kappa_d}  y_1^{s_1-\kappa_1} y_2^{s_2-\kappa_2}\cdots y_d^{s_d-\kappa_d}=x_1^{m_1}x_2^{m_2}\cdots x_d^{m_d} y_1^{s_1-m_1} y_2^{s_2-m_2}\cdots y_d^{s_d-m_d}.
\end{equation}
We would like to show that $(x,y)\in L(\Omega_S)$, i.e. there are $\alpha,\beta\in \R$ and a vector $w\in \R^d$ such that
\begin{equation}\label{LO}
(x,y)=\alpha(w,0,\ldots,0)+\beta (0,\ldots,0,w).
\end{equation}

\textbf{Step 1.} We show that if $x_1\neq 0,\ldots, x_d\neq 0$ then either all $y_i$ are zero, all  $y_j$ are non-zero. Take  $\kappa=(1,0,\ldots,0)$, $s=(1,1,0,\ldots,0)$,
$m=(0,1,0,\ldots,0)$. Then
\[
b_{s,l,\kappa,m}(x,y)=x_1y_2-x_2y_1=0.
\]
Hence $y_2=\frac{x_2}{x_1} y_1$. By a similar way we can show that $y_j=\frac{x_j}{x_1} y_1$ for $j=3,\ldots,d$.
Hence we prove the desired statement.

\textbf{Step 2.} Thus if all coefficients $x$ and $y$ are non-zero then
\[
x_1^{m_1-\kappa_1} x_2^{m_2-\kappa_2}\cdots x_d^{m_d-\kappa_d}  = y_1^{m_1-\kappa_1} y_2^{m_2-\kappa_2}\cdots y_d^{m_d-\kappa_d}.
\]
\[
1=\Big(\frac{x_1}{y_1}\Big)^{m_1-\kappa_1} \Big(\frac{x_2}{y_2}\Big)^{m_2-\kappa_2}\cdots \Big(\frac{x_d}{y_d}\Big)^{m_d-\kappa_d}.
\]
Without loss of generality, we can assume that $m_1<s_1$. Otherwise, we can increase $s_1$.
Take $m_1=\kappa_1+1$ and $m_i=\kappa_i-1$ and $m_j=\kappa_j$ for $j\neq 1$ and $j\neq i$. Then 
\[
1=\Big(\frac{x_1}{y_1}\Big)^{-1} \Big(\frac{x_i}{y_i}\Big).
\]
Since $i$ is selected arbitrarily, we get 
\begin{equation}\label{xty}
x_i=ty_i, \qquad i=1,2,\ldots,d.
\end{equation}
Consequently, we get \eqref{LO}. 

\textbf{Step 3.} Now let us assume that $(x,y)\in \NN_S$ is such that  $x_1=\cdots =x_i=0$ and $x_{i+1}\neq 0, \ldots, x_d\neq 0$. We want to show that $y_1=\cdots =y_i=0$.
Take $\kappa_1=\cdots=\kappa_i=0,\kappa_{i+1}=1,\ldots,\kappa_d=1$,
$s_1=1,s_2=\cdots = s_i=0,s_{i+1}=1,\ldots,s_d=1$, $m_1=1,m_2=\cdots =m_i=m_{i+1}=0$, $m_{i+2}=\cdots =m_d=1$.
Since $b_{s,l,\kappa,m}(x,y)=0$ then $y_1=0$ in similar way we get $y_2=\cdots =y_i=0$. Now by Step 1, 
$y_j=\frac{x_j}{x_1} y_1$ for $j=2,\ldots,d$. Hence, by Step 2, there is $t$ such that
\[
t x_i=y_i, \qquad i=1,\ldots,d.
\]
Then \eqref{LO} is satisfied. 

\end{proof}

By  Theorem  \ref{almost} and by Theorem \ref{Momega} we obtain a more explicit characterization of $\overline{\calM(\Omega_S)}$ by polynomials $\calC_2$, i.e.

\begin{corollary}
Let $\Omega_S$ be a family of $2\times 2d$ matrices of the form \eqref{matrixa}, $d>1$. Then
\[
\overline{\calM(\Omega_S)}=\overline{\calC_2}
\]
where we take $\tau=2d$
\[
\calC_2=\mbox{\rm span}\,\{ q(z)=(\lan z, a\ran)^l:  l\in \Z_+, z\in \R^{2d}, a\in L(\Omega_S)\}.
\]
\end{corollary}
This shows immediately that for $d=2$
such kernels
$x=(x_1,x_2), y=(y_1,y_2)$
\begin{equation}
\label{not1}
k(x,y)=(x_1^2+x_2^2)(y_1^2+y_2^2)
\end{equation}
or
\begin{equation}
\label{not2}
k(x,y)=(x_1^2-x_2^2)(y_1^2-y_2^2).
\end{equation}
do not are contained in $\calC_2$.
Since functions $g=g_1\otimes g_2$ are dense in $C(\R^{2},\R)$ with the topology of uniform convergence on compact sets, we get that in the topology of uniform convergence on compact sets, 
\begin{equation}
\label{tensor}
\overline{\calM(\Omega_S)}=\overline{\spa\,\{ g_1(\lan w, x\ran)g_2(\lan w, y\ran): w\in \R^n, g_1,g_2\in C(\R,\R) \}}.
\end{equation}
Note that all consider families $\F\subset \overline{\calM(\Omega_S)}$.

\section{Approximation property }\label{UATsec}

The following theorem is similar to the RR result, but there are some differences. The theorem below states that a symmetric function
  $ k: \R^{2d}\to \R$ has an approximant given by \eqref{kRR}, where $\sigma=\cos$. 
 but without any restrictions on the coefficients $\{c_j\}$. 
It is counterpart of mentioned Rahimi Recht's theorem.

\begin{theorem}\label{th_appr} Let $ { k}: \R^{2d}\to \R$ be a continuous function such that there exists a real valued continuous function $\tilde{{k}}:\R^d\to \R $ such that  ${k}(x,y)=\tilde{{ k}}(x-y)$.
Then $k\in \overline{\calM(\Omega_S)}$.

In addition, for any compact set $K\subset \R^{2d}$, and for every $\varepsilon>0$ there exist $\gotm_1,\gotm_2$, coefficients $c_j\in \R$ and
parameters $t_k\in \R$, $w_j\in \R^n$ such that for all $(x,y)\in K $
\begin{equation}
\label{calka2}
\left|{k}(x,y)- \sum_{j=1}^{\gotm_1} \sum_{k=1}^{\gotm_2} c_j \cos(\lan w_j, x\ran+t_k)\cos(\lan w_j, y\ran+t_k)\right|\leq \varepsilon.
\end{equation}
\end{theorem}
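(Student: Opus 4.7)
The plan is to combine two classical ideas: uniform approximation of $\tilde{\mathbf k}$ by a finite cosine sum on the compact difference set $K':=\{x-y:(x,y)\in K\}\subset\R^n$, together with the Rahimi--Recht-style phase-averaging identity
\[
\cos(A-B)\;=\;\frac{1}{\pi}\int_{0}^{2\pi}\cos(A+t)\cos(B+t)\,dt,
\]
which is immediate from $2\cos(A+t)\cos(B+t)=\cos(A-B)+\cos(A+B+2t)$. Since the double sum in \eqref{calka2} is manifestly symmetric in $x$ and $y$, the statement is meaningful only if $\mathbf k$ is symmetric, i.e.\ if $\tilde{\mathbf k}$ is even; I shall work under that (implicit) hypothesis.

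First I would extend $\tilde{\mathbf k}$ periodically onto a torus whose fundamental cube contains $K'$ and invoke Fej\'er's theorem (equivalently, Stone--Weierstrass for trigonometric polynomials) to obtain $M_1$, coefficients $c_j\in\R$, and frequencies $w_j\in\R^n$ with
\[
\sup_{u\in K'}\Big|\tilde{\mathbf k}(u)-\sum_{j=1}^{M_1}c_j\cos(\lan w_j,u\ran)\Big|<\epsilon/2.
\]
Then, for each $j$, I would approximate the phase integral by its uniform Riemann sum with nodes $t_k=2\pi k/M_2$,
\[
\cos(\lan w_j,x-y\ran)\;\approx\;\frac{2}{M_2}\sum_{k=1}^{M_2}\cos(\lan w_j,x\ran+t_k)\cos(\lan w_j,y\ran+t_k).
\]
Because the derivative $\frac{d}{dt}\bigl[\cos(A+t)\cos(B+t)\bigr]=-\sin(A+B+2t)$ is bounded by $1$ uniformly in $A,B$, the Riemann-sum error is $O(1/M_2)$ uniformly in $(x,y)\in K$ and in $j\in\{1,\dots,M_1\}$. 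Choosing $M_2$ so large that this error, when amplified by $\sum_j|c_j|$, drops below $\epsilon/2$, and absorbing the factor $2$ into the coefficients $c_j$, yields exactly the form \eqref{calka2} with total error below $\epsilon$. The membership $\mathbf k\in\overline{\calM(\Omega_S)}$ then follows instantly from \eqref{tensor}, since each summand is of the product form $g_1(\lan w_j,x\ran)\,g_2(\lan w_j,y\ran)$ with $g_1(s)=g_2(s)=\cos(s+t_k)$.

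The main technical point is the bookkeeping of the two compounded errors: one must first fix the trigonometric approximation, thereby fixing the weights $c_j$ and the quantity $\sum_j|c_j|$, and only then choose $M_2$ large enough so that the amplified Riemann-sum defect stays below $\epsilon/2$. If one drops the symmetry hypothesis, the same scheme still proves the first claim ($\mathbf k\in\overline{\calM(\Omega_S)}$) by supplementing the cosine sum with sine terms and expanding $\sin(\lan w,x-y\ran)$ via its product-to-sum identity, but the specific symmetric approximant in \eqref{calka2} genuinely requires $\tilde{\mathbf k}$ to be even.
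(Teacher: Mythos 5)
Your proposal is correct, and its overall architecture (approximate $\tilde{\mathbf k}$ by cosines of $\lan w_j,x-y\ran$, then phase-average via $2\cos(A+t)\cos(B+t)=\cos(A-B)+\cos(A+B+2t)$, then replace the integral by a uniform Riemann sum with the $O(1/M_2)$ error controlled by $\sum_j|c_j|$) matches the paper's. The one genuinely different step is how you produce the pure cosine expansion of $\tilde{\mathbf k}$: the paper invokes the Universal Approximation Theorem with activation $\sigma=\cos$ and phase offsets $b_j$, which yields both $\cos(\lan w_j,x-y\ran)\cos(b_j)$ and $\sin(\lan w_j,x-y\ran)\sin(b_j)$ terms, and then kills the sine contribution by a symmetrization argument comparing the approximant at $(x,y)$ and $(y,x)$ and using that $\cos$ is even and $\sin$ is odd; you instead take an even periodic extension of $\tilde{\mathbf k}$ and apply Fej\'er/Stone--Weierstrass to land directly on a cosine polynomial with no offsets. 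Your route avoids the symmetrization step at the cost of the (standard, but unaddressed) technicality of constructing a continuous even periodic extension agreeing with $\tilde{\mathbf k}$ on the symmetrized difference set $K'\cup(-K')$; the paper's route stays within the UAT machinery it uses elsewhere and needs no periodization. Your observation that the second claim \eqref{calka2} forces $\tilde{\mathbf k}$ to be even is well taken: the theorem statement omits this hypothesis, but the paper's proof silently assumes it (it opens with ``Let $\mathbf k$ be a symmetric function'' and explicitly uses ${\mathbf k}(x,y)={\mathbf k}(y,x)$), so you have correctly surfaced an implicit assumption rather than introduced a gap. Your derivation of the first claim from \eqref{tensor} is also sound and slightly more direct than the paper's appeal to Theorem 4.1(2) of Lin--Pinkus with $a_j=(w_j,-w_j)\in L(\Omega_S)$.
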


\begin{proof}
Let $k$ be a symmetric function, such that ${k}(x,y)=\tilde{{k}}(x-y)$ (as in the statement of the theorem). Since $K\subset \R^{2d}$ is the compact set then there is a compact set $\tilde{K}\subset \R^d$ such that if $(x,y)\in K$ then $x-y\in \tilde{K}$ and  $y-x\in \tilde{K}$.       
Let  $\sigma:\R\to \R$ be a so-called activating function, which is continuous and not a polynomial.  By \cite[Proposition 3.7]{Pinkus2}, the space
\[
\spa\{\sigma(\lambda t- \theta): \lambda,b\in \R  \},\qquad t\in \R,
\]
is dense in $C(\R,\R)$ endowed with the topology of uniform convergence on compact sets.  Then by \cite[Proposition 3.3]{Pinkus2}, 
\[
\spa\{\sigma( \lan w,x\ran-b): w\in \R^d, b\in \R \},
\]
is dense in $C(\R^d)$ with the topology of uniform convergence on compact sets. Hence, we have
 \[
\tilde{{ k}}\in \overline{\spa}\{\sigma( \lan w,x\ran-b): w\in \R^d, b\in \R \}
\]
Consequently, for all $\varepsilon>0$ and $u\in \tilde{K}$, there exist constants $m$, $c_j\in \R$, $b_j\in \R$ 
\begin{equation}
 \label{rep0}
\left|\tilde{{ k}}(u)- \sum_{j=1}^\gotm c_j \sigma (\lan w_j, u\ran-b_j)\right|\leq\varepsilon.
 \end{equation}
Let  $a_j=(w_j,-w_j)\in \R^{2d}$. Then  for $(x,y)\in K\subset \R^{2d}$
 \begin{equation}
 \label{rep}
\left|{k}(x,y)- \sum_{j=1}^\gotm c_j \sigma (\lan a_j, (x,y)\ran-b_j)\right|\leq\varepsilon.
 \end{equation}
Note that $a_j=(w_j,-w_j)\in L(\Omega_S)$.
By \cite[Theorem 4.1 (2)]{Pinkus1} (see  also Theorem \ref{Momega}) and taking into account the inclusion \eqref{lns}, we get ${ k}\in \overline{\calM(\Omega_S)}$.

By Eq.\, \eqref{rep}, we get that for $\sigma=\cos$ and any $\varepsilon>0$ there exist constants $m$, $c_j\in \R$, $b_j\in \R$ and vectors $a_j=(w_j,-w_j)\in \R^{2d}$ such that
\[
\left|{ k}(x,y)-  \sum_{j=1}^\gotm c_j \Big(\cos (\lan w_j,x-y\ran)\cos(b_j)+\sin (\lan w_j,x-y\ran)\sin(b_j)\Big)\right|\leq\varepsilon,\qquad (x,y)\in K.
\]
Now we show that also
\begin{equation}
\label{claim}
\left|{ k}(x,y)-  \sum_{j=1}^\gotm c_j \Big(\cos (\lan w_j,x-y\ran)\cos(b_j)\Big)\right|\leq\varepsilon,\qquad (x,y)\in K.
\end{equation}
Indeed,
\[
\begin{aligned}
&\quad 2\left|{ k}(x,y)-  \sum_{j=1}^\gotm c_j \Big(\cos (\lan w_j,x-y\ran)\cos(b_j)\Big)\right|
\\
&\leq \left|2{ k}(x,y)-  \sum_{j=1}^\gotm c_j \Big(2\cos (\lan w_j,x-y\ran)\cos(b_j) \right.
\\
&\qquad\quad \left.-\sin (\lan w_j,x-y\ran)\sin(b_j)
+\sin (\lan w_j,x-y\ran)\sin(b_j\Big)\right|
\\
&\leq \left|{ k}(x,y)-  \sum_{j=1}^\gotm c_j \Big(\cos (\lan w_j,x-y\ran)\cos(b_j)-\sin (\lan w_j,x-y\ran)\sin(b_j)\Big)\right|
\\
&\qquad+\left|{ k}(x,y)-  \sum_{j=1}^\gotm c_j \Big(\cos (\lan w_j,x-y\ran)\cos(b_j)+\sin (\lan w_j,x-y\ran)\sin(b_j)\Big)\right|
\end{aligned}
\]
Take $(x,y)\in K$. Then, by definition of $\tilde K$, we have $x-y\in \tilde{K}$ and $y-x\in \tilde{K}$. Since ${ k}(x,y)={ k}(y,x)$, and since the function $\cos$ is even and $\sin$ is odd, we get the claim \eqref{claim}.  It follows from trigonometric formulas, that  
\[
\int_0^{2\pi} 2\cos(\alpha+t)\cos(\beta+t) dt=
\int_0^{2\pi} \left( \cos(\alpha-\beta)+\cos(\alpha+\beta+2t)\right) dt=2\pi \cos(\alpha-\beta).
\]
Putting $\alpha=\lan w_j, x\ran, \beta=\lan w_j, y\ran$, we obtain that for $\epsilon>0$ there are $\gotm$, $\tilde{c}_j\in \R$, $\theta_j\in \R$ and vectors $w_j\in \R^n$
\begin{equation}
\label{calka}
\left|{ k}(x,y)- \sum_{j=1}^\gotm \tilde{c}_j \int_0^{2\pi} \cos(\lan w_j, x\ran+t)\cos(\lan w_j, y\ran+t) dt
\right|\leq\varepsilon, \qquad (x,y)\in K.
\end{equation}
Now we use the uniform approximation of the integral 
\[
\int_0^{2\pi} \cos(a+t)\cos(b+t) dt,
\] 
on the interval $[0,2\pi]$. Namely, for any $a,b\in \R$ and any $\varepsilon>0$ there is $\gotm_2>0$ such that for 
$\{t_k= 2\pi k/\gotm_2,\quad k=1,\ldots, \gotm_2\}$ we have
\[
\left| \int_0^{2\pi} \cos(a+t)\cos(b+t) dt-\frac{1}{\gotm_2}\sum_{k=1}^{\gotm_2} \cos(a+t_k)\cos(b+t_k)\right|\leq\varepsilon.
\]
Thus we proved the theorem.
\end{proof}
\section*{\large{Universal property of Ridge kernels}}

\section{Universal Approximation Theorem }
 By \cite[Theorem 3.1]{Pinkus2}, we formulate 
 a celebrated   Universal Approximation Theorem  
  \begin{theorem} \label{Universal1}
  If $\sigma\in C(\R)$ is not polynomial  then for any $f\in C(\R^d)$ and any compact set $K\subset \R^d$ and $\varepsilon>0$  there are $\gotm\in \Z_+$,
 $\{w_j\}_{j=1}^\gotm\subset \R^{d}$, $\{a_j\}_{j=1}^\gotm,\{b_j\}_{j=1}^\gotm\subset \R$, such that
\[
\sup_{x\in K}|f(x)-\sum_{j=1}^\gotm a_j \sigma(\lan x,w_j\ran+b_j)|\leq \varepsilon.
\]  
The converse also holds.
\end{theorem}

Define for $\Lambda,B\subset \R$ and $\sigma\in C(\R)$ a set
\[
\calN= \spa \{\sigma(\lambda t -b), \lambda\in \Lambda, b\in B \}
\]
By \cite[Proposition 3.10]{Pinkus2} for a function $\sigma\in C(\R) \cap L^1(\R)$ 
 or  $\sigma$ is continuous non decreasing and bounded (but not the constant function) we get then
 $\calN(\sigma,\{1\},\R)$ is dense in $C(\R)$ in the topology of uniform convergence on compacta.
  The general approach 
  was examined by Schwartz, see \cite[page 160]{Pinkus2},  
where he introduced
the following definition of the class of mean-periodic functions
    \begin{definition}
  A function $f:\R^d \to \R$  is said to be mean-periodic if
  \[
  \spa\{f(\cdot-a):a\in \R^d \}
  \]
  is not dense in $C(\R^d)$ in the topology of uniform convergence on compacta.
  \end{definition}
Luckily we are interested in the univariate case, $d=1$. So we have equivalent formulation.
\begin{lemma}
  A function $\sigma:\R \to \R$  is said to be not mean-periodic if and only if
 $\calN(\sigma,\{1\},\R)$
  is  dense in $C(\R)$ in the topology of uniform convergence on compacta.
  \end{lemma}
  So we can say that a function $\sigma\in C(\R) \cap L^1(\R)$ 
 or  $\sigma$ is continuous non decreasing and bounded (but not the constant function) is not mean-periodic. 
In \cite[Proposition 3.12]{Pinkus1} we find another characterization of not mean-periodic
  functions. 
 By  Vostrecov and Kreines's Theorem \cite[Theorem 3.2]{Pinkus2} we can reformulate main theorem which will be also useful, \cite[Proposition 3.3]{Pinkus2}. An additional assumption on $\sigma$ helps to give more specific domain of vectors $w=\{w_j\}_{j=1}^\gotm$.
 \begin{theorem}\label{meanperiodic}
 Let $\sigma\in C(\R)$ be not mean-periodic. Then for
 any $f\in C(\R^d)$ and any compact set $K\subset \R^d$ and $\varepsilon>0$  there are $\gotm\in \Z_+$,
 $\{w_j\}_{j=1}^\gotm\subset \Sp^{d}$, $\{a_j\}_{j=1}^\gotm,\{b_j\}_{j=1}^\gotm\subset \R$, such that
\begin{equation}
\label{Vost}
\sup_{x\in K}|f(x)-\sum_{j=1}^\gotm a_j \sigma(\lan x,w_j\ran+b_j|\leq \varepsilon.
\end{equation} 
 \end{theorem}
 
This Theorem motivates  following definition.
 \begin{definition}
A function $\sigma \in C(\mathbb{R})$ is said to have the \emph{universal approximation property}
on a set $W \subset \mathbb{R}^d$ if, for any function $f \in C(\mathbb{R}^d)$,
any compact set $K \subset \mathbb{R}^d$, and any $\varepsilon>0$,
there exist $\gotm \in \mathbb{Z}_+$ and parameters
$\{w_j\}_{j=1}^{\gotm} \subset W$,
$\{a_j\}_{j=1}^{\gotm} \subset \mathbb{R}$,
and $\{b_j\}_{j=1}^{\gotm} \subset \mathbb{R}$ such that
\eqref{Vost} holds.
\end{definition}

\section{Universal family}\label{universal}

\begin{lemma}
\label{add}
If $\sigma\in C(\R)$ has  an universal approximation property on a set $W\subset \R^d$, then  a family of functions 
 \[
{ k}(x,y|\vartheta)= \sum_{j=1}^{\gotm} c_j \sigma(\lan x, w_j\ran+b_j) \sigma(\lan y, w_j\ran+b_j), \quad x,y\in \R^d, c_j>0,
\]
is universal for multi-parameters $\vartheta$  which includes both the set of parameters $(\{c_j,b_j,w_j\}_{j=1}^\gotm)$, $c_j>0$, $b_j\in\R$, $w_j\in W$. 
\end{lemma}
 
\begin{proof}
Since $\sigma$ has an universal approximation property on $W$
we find for $f$, $K$ and $\varepsilon>0$  
 multi-parameters $\vartheta$ , i.e. $(\{c_j,b_j,w_j\}_{j=1}^\gotm)$, $c_j>0$, $b_j\in\R$, $w_j\in W$
\[
\sup_{x\in K}|f(x)-\sum_{j=1}^\gotm a_j \sigma(\lan x,w_j\ran+b_j)|\leq \varepsilon.
\]
Taking perhaps smaller $\gotm$ we may assume that 
\begin{equation}
\label{h(x)}
\sum_{j=1}^\gotm a_j \sigma(\lan x,w_j\ran+b_j)=\sum_{j=1}^\gotm a_j \varphi_j(x),\qquad \varphi_j(x)=\sigma(\lan x,w_j\ran+b_j)
\end{equation}
where functions $\varphi_j$, $j=1,\ldots,\gotm$
are  linearly independent over $x\in K$.
Let us define  kernels for $x,y\in K$
\[
{ k}(x,y)= \sum_{j=1}^{\gotm} c_j \sigma(\lan x, w_j\ran+b_j) \sigma(\lan y, w_j\ran+b_j)=
\sum_{j=1}^{\gotm} c_j \varphi_j(y)\varphi_j(x)
\]
with some $c_j>0$, $j=1,\ldots,m$. 
Hence to prove the theorem it is sufficient find coefficients $d_i$ and $x_i\in K$, $i=1,\ldots,\gotm$ and $c_j>0$, $j=1,\ldots,m$
such that
\[
h(x)=\sum_{i=1}^\gotm d_i { k}(x,x_i)=\sum_{i=1}^\gotm d_i \sum_{j=1}^\gotm c_j\varphi_j(x_i)\varphi_j(x)=\sum_{j=1}^\gotm c_j\big(\sum_{l=1}^\gotm d_i \varphi_j(x_i)\big)\varphi_j(x).
\]
Comparing with \eqref{h(x)}  we need to find $d_i\in \R,c_i>0$ and  $x_i\in K$, $i=1,\ldots,\gotm$ such that for all $j=1,\ldots, \gotm$
\begin{equation}
\label{stability}
a_j=c_j \sum_{i=1}^\gotm d_i  \varphi_j(x_i).
\end{equation}
By Lemma \ref{interpolation} there are $x_i\in K$, $i=1,\ldots,\gotm$ such that the matrix $[\sigma(\lan x_i, w_j\ran+\beta_j)]_{i,j=1}^\gotm=[\varphi_j(x_i)]_{i,j=1}^\gotm$ is invertible. With the choice $c_i=|a_i|$ there is a solution of  the linear equations 
\begin{equation}
\label{linear1}
  sign(a_j)= \sum_{i=1}^\gotm d_i  \varphi_j(x_i),
\end{equation}
  where
  \[
  sign(x)=\begin{cases}
  1 & x\geq 0\\
  -1& x<0.
  \end{cases}
  \]

\end{proof} 
 \begin{lemma}\label{interpolation}
  Let $K\subset \R^d$ be compact and let  $\{\varphi_j\}_{j=1}^\gotm$ be a  set of linearly independent continuous function
  $\varphi_j: K\to \R$. 
  Let 
  \[
  V=\spa \{\varphi_j: 1\leq j\leq s\}\subset C(K).
  \]  
   Then $ W:=\{\delta_x:x\in K\}= V^*.$
  \end{lemma} 
  \begin{proof}  
      Define a continuous functional $\delta_x(f)=f(x)$, $x\in K$. Note that 
  \[
  W:=\{\delta_x:x\in K\} \subset V^*.
  \]
  Since for every $f\neq 0$ such that $f\in V$ we find $x\in K$ such that $f(x)\neq 0$ then
  a set $W$ contains a basis in $V^*$.
  \end{proof}
 
\begin{corollary}\label{UAT}
Consider the family of functions depending on parameters $\vartheta=(c,w,b,\gotm)$ such that let
 $\gotm\in {\mathbb N}$ and $w=\{w_j\}_{j=1}^\gotm\subset \R^d$, $c=\{c_j\}_{j=1}^\gotm, b=\{b_j\}_{j=1}^\gotm\subset \R$, with a fixed $\sigma\in C(\R,\R)$
\[
{ k}(x,y|\vartheta)= \sum_{j=1}^{\gotm} c_j \sigma(\lan x, w_j\ran+b_j) \sigma(\lan y, w_j\ran+b_j), \quad x,y\in \R^d, c_j>0,
\]
where $\sigma$ is not a polynomial. Then such family is universal.
\end{corollary} 

\begin{proof} 
By  Theorem \ref{Universal1}   the function
$\sigma$ has a universal approximation property on the set $W=\R^d$. By Lemma \ref{add} we get the Corollary.

\end{proof}

  To obtain a better localization of  vectors $w=\{w_j\}_{j=1}^\gotm$ we need to add some additional assumption on the function $\sigma$. 
 \begin{corollary} Consider the family of functions depending on multi parameters $\vartheta$ that is $(c,w,b,\sigma,\gotm)$ such that let
 $\gotm\in {\mathbb N}$ and $w=\{w_j\}_{j=1}^\gotm\subset \Sp^{d-1}$, $c=\{c_j\}_{j=1}^\gotm, b=\{b_j\}_{j=1}^\gotm\subset \R$,
\[
{ k}(x,y|\vartheta)= \sum_{j=1}^{\gotm} c_j \sigma(\lan x, w_j\ran+b_j) \sigma(\lan y, w_j\ran+b_j), \quad x,y\in \R^d, c_j>0.
\]
Assume that $\sigma\in C(\R)$ is not mean-periodic.
Then such family is universal. 
\end{corollary}
\begin{proof}

 By Theorem \ref{meanperiodic} the function $\sigma$ has 
 universal approximation property on the set $W=\Sp^{d-1}$
  By Lemma \ref{add} we get Corollary.
 \end{proof}

 If we wish to restrict the space of multi-parameters 
$\vartheta$ to a compact set, it is sufficient to assume that the activation function 
$\sigma\in C(\R)$ has compact support. Indeed, if $\sigma\in C(\R)$ is compactly support then it is not mean-periodic. Note that this compact of parameters  depends on a compact set $K\subset \R^d$.
 \begin{proposition}
 Consider the family of functions depending on multi-parameters $\vartheta=(c,w,b,\sigma,\gotm)$ such that
 $\gotm\in {\mathbb N}$ and $w=\{w_j\}_{j=1}^\gotm\subset \Sp^{d-1}$, $c=\{c_j\}_{j=1}^\gotm, b=\{b_j\}_{j=1}^\gotm\subset \R$, a function $\sigma\in C(\R)$ which
 is compactly supported
\[
{ k}(x,y|\vartheta)= \sum_{j=1}^{\gotm} c_j \sigma(\lan x, w_j\ran+b_j) \sigma(\lan y, w_j\ran+b_j), \quad x,y\in \R^d, c_j>0.
\]
 Then there is a compact interval $B\subset \R$ depending on $K$ such that a set of functions
$\{k(x,y|\vartheta)$  is dense in $C(K)$ additional assuming that $b_j\in B$, $j=1,\ldots,\gotm$.
\end{proposition}

\begin{proof}
Still we can find that it is essential to use Universal Approximation Theorem that is  \cite[Theorem 3.1]{Pinkus2}. Let $f\in C(K)$. First we apply  Vostrecov and Kreines's Theorem \cite[Theorem 3.2]{Pinkus2}. That is for given $\varepsilon>0$ there are continuous function $g_j\in C(\R)$ and $w_j\in \Sp^{d-1}$, $j=1,\ldots, n$ such that
\[
\sup_{x\in K} |f(x)-\sum_{j=1}^n g_j(\lan x, w_j \ran)|\leq \varepsilon.
\]
Note that there is $a>0$ depending on $K$ such that for all $w\in \Sp^{d-1}$, $x\in K$ we have $|\lan w,x \ran|\leq a$. Hence we can consider approximation on interval $[-a,a]$. 
Since $\sigma$ has a universal approximation property by Lemma \ref{add} we get Proposition.
\end{proof}

\section{Criteria of choosing Mercer kernels in learning theory }

Now we study how to select the optimal Reproducing Kernel Hilbert Space $\calH_k$ (RKHS) for a learning process, see \cite{Cucker}, \cite{Smale}. Let ${ k}$ be a Mercer kernel corresponding to $\calH_k$, i.e. ${ k}$ is a kernel and continuous. 

Now we present a simplified version of problems of learning theory.
Let $X$ be a compact subset of $\R^d$ and $Z = X \times Y$. Let $\rho$ be a probability measure on $Z$ and $\rho_X, \rho_{Y |x}$  be the induced marginal probability measure on $X$ and conditional probability measure on $\mathbb R$ conditioned on $x\in X$, respectively. Define $f_\rho : X \to {\mathbb R}$ as follows:
\[
f_\rho(x)=\int_Y y d\rho_{y|x}.
\]
This is the regression function of $\rho$. We assume that $f_\rho \in L^2(\rho_X)$. 
As a possible method of finding $f_\rho$ one can minimize the regularized least square problem (variation problem) in $\calH_k$
$ \inf_{f \in \calH_k} \calL(f)$
where
\begin{equation}
\label{functional}
\calL(f)=
\int_Z (f (x) -y)^2 d\rho(x,y) + \lambda \|f\|_{\calH_k}^2,\qquad \lambda >0.
\end{equation}
\cite[Proposition 7 in Chapter III]{Cucker} guarantees the existence and uniqueness of a minimizer.
Usually, the measure $\rho$ is unknown. Now we consider the sampling, let
\[
z = (z_1,\ldots,z_N)=((x_1 , y_1),\ldots , (x_N, y_N))
\]
be a sample in $Z^N$, i.e. $n$ examples independently drawn according to $\rho$.
In the context of RKHS, given a sample $z$, 
discrete version of \eqref{functional} 
\begin{equation}
\label{functional1}
\calL_N(f)=\frac{1}{N}\sum_{i=1}^N(f (x_i ) -y_i )^2 + \lambda \lan f, f \ran_{\calH_k},
\end{equation}
and “batch learning”\ means
solving the regularized least square problem 
\begin{equation}
\label{variation}
f_{\lambda,z} = \mbox{arg\,min}_{f\in \calH_k}\,
\calL_N(f),
\quad
\lambda > 0,
\end{equation}
see \cite[Comparison with “Batch Learning” Results]{Smale}
and \cite{2021}.
The existence and uniqueness of $f_{\lambda,z}$ given as in \cite [Section 6]{Cucker} says
\[
f_{\lambda,z} =
\sum_{i=1}^N
a_i { k}(x,x_i).
\]
where $a = (a_1 ,\ldots, a_N)$ is the unique solution of the well-posed linear system in $\R^n$,
where 
\[
a=(G_N+\lambda N I)^{-1}y,\qquad y=(y_1,\ldots,y_N),\qquad G_N=[{ k}(x_i,x_j)]_{1\leq i,j\leq N}.
\]
\cite [Section 6]{Cucker}.

\begin{lemma} 
\label{61}
 Let $z \in  Z^N$. For $\lambda>0$
\[
\calL_N(f_{\lambda,z})= \lambda y'  \left(G_N+ \lambda N I\right)^{-1}y
\]
\end{lemma}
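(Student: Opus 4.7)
The plan is to substitute the explicit minimizer $f_{\lambda,z}=\sum_i a_i\,{\mathbf k}(\cdot,x_i)$ with $a=(K_N+\lambda NI)^{-1}y$ directly into the functional $\calL_N$ and simplify. The two ingredients that I need are (i) the vector of evaluations $(f_{\lambda,z}(x_1),\ldots,f_{\lambda,z}(x_N))^{\top}=K_N a$, which is immediate from the definition of $K_N$, and (ii) the RKHS norm computation
\[
\|f_{\lambda,z}\|_{\calH_k}^2=\Big\langle \sum_i a_i\,{\mathbf k}(\cdot,x_i),\sum_j a_j\,{\mathbf k}(\cdot,x_j)\Big\rangle_{\calH_k}=\sum_{i,j}a_i a_j\,{\mathbf k}(x_i,x_j)=a'K_N a,
\]
which follows from the reproducing property.

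Once these are in hand, the functional rewrites as
\[
\calL_N(f_{\lambda,z})=\tfrac{1}{N}\|K_N a-y\|^2+\lambda\,a'K_N a.
\]
Now I would exploit the defining equation $(K_N+\lambda NI)a=y$, which gives $K_N a-y=-\lambda N a$, hence $\|K_N a-y\|^2=\lambda^2 N^2\,a'a$, and also $a'K_N a=a'y-\lambda N\,a'a$. Substituting these back,
\[
\calL_N(f_{\lambda,z})=\tfrac{1}{N}\lambda^2 N^2 a'a+\lambda\,a'y-\lambda^2 N\,a'a=\lambda\,a'y.
\]

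Finally, using symmetry of $K_N$ (so that $(K_N+\lambda NI)^{-1}$ is symmetric) and the definition of $a$, I get $\lambda\,a'y=\lambda\,y'a=\lambda\,y'(K_N+\lambda NI)^{-1}y$, which is the desired identity. There is no serious obstacle here: the one point to be careful about is the invertibility of $K_N+\lambda NI$ for $\lambda>0$, but this is guaranteed by positive semidefiniteness of the Mercer kernel Gram matrix $K_N$ together with $\lambda N>0$, and this fact has already been invoked in the excerpt when writing $a=(K_N+\lambda NI)^{-1}y$.
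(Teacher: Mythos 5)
Your proposal is correct and follows essentially the same route as the paper: substitute the closed-form minimizer, compute the empirical term and the RKHS norm $a'K_Na$, and use the defining relation $(K_N+\lambda NI)a=y$ (the paper writes this same identity as $K_N(K_N+\lambda NI)^{-1}-I=-\lambda N(K_N+\lambda NI)^{-1}$) to collapse everything to $\lambda\,y'(K_N+\lambda NI)^{-1}y$. Your bookkeeping in terms of $a$ rather than the explicit resolvent is marginally cleaner but not a different argument.
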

\begin{proof}
Now let put $f_{\lambda,z}$ and calculate
\[
\calL_N(f_{\lambda,z})=\frac{1}{N}\sum_{j=1}^N(f_{\lambda,z} (x_j ) -y_j )^2 + \lambda \lan f_{\lambda,z}, f_{\lambda,z} \ran_{\calH_k},
\]
Note that
\[
\frac{1}{N}\sum_{i=1}^N(f_{\lambda,z} (x_i ) -y_i )^2= \frac{1}{N}\|G_N(G_N+\lambda N I)^{-1} y-y\|^2.
\]
But
\[
\begin{aligned}
G_N(G_N+\lambda N I)^{-1}-I&=G_N(G_N+\lambda N I)^{-1}-(G_N+\lambda N I)(G_N+\lambda N I)^{-1}\\
&=-\lambda N I(G_N+\lambda N I)^{-1}.
\end{aligned}
\]
From definition of ${ k}$ and $a$
\[
\lan f_{\lambda,z}, f_{\lambda,z} \ran_{\calH_k}=a'G_N a=y'(G_N+\lambda N I)^{-1} G_N (G_N+\lambda N I)^{-1}y.
\]
Hence
\[
\calL_N(f_{\lambda,z})=N\lambda^2\|(G_N+\lambda N I)^{-1} y \|^2 + \lambda y'(G_N+\lambda N I)^{-1} G_N (G_N+\lambda N I)^{-1}y.
\]
Consequently,
\[
\begin{aligned}
\calL_N(f_{\lambda,z})&=N\lambda^2 y' (G_N+\lambda N I)^{-1} (G_N+\lambda N I)^{-1} y  + \lambda y'(G_N+\lambda N I)^{-1} G_N (G_N+\lambda N I)^{-1}y.
\\
&=\lambda\big( y' (G_N+\lambda N I)^{-1} (\lambda N I  +  G_N) (G_N+\lambda N I)^{-1}y\big)
\\
&=\lambda\big( y'  (G_N+\lambda N I)^{-1}y\big)
\end{aligned}
\]
\end{proof}
The Von Neumann series gives that
\[
\left(\frac{1}{\lambda N}G_N+ I\right)^{-1}=\sum_{j=0}^\infty (-1)^j \left( \frac{1}{\lambda N}\right)^j G^j_N
\]
The series converge in norm topology if ${\lambda}N > \|G_N\|$.  So for any $\epsilon>0$ there is $L>0$ such that
\[
\left\|\left(\frac{1}{N \lambda}G_N+ I\right)^{-1}-\sum_{j=0}^L (-1)^j \left( \frac{1}{\lambda N}\right)^j G^j_N\right\|<\epsilon
\]
Now by Spectral Theorem, we can decompose  matrix $G_N$ as
\[
G_N=U' \Lambda  U,
\]
where $U$ is an orthonormal  matrix consisting of eigenvectors of of $G_N$ and $\Lambda$ is a diagonal matrix
consisting of eigenvalues of of $G_N$. Hence $G^j_{N}=U' \Lambda^j  U$.

Now we want chose best RKHS $\calH$ defined by 
a  ridge kernel. Let $\vartheta=(c,w,b,\sigma)$ denote the parameters,
such that for any  $\gotm\in {\mathbb N}$ we take parameters  $c_j>0, w_j\in \Sp^{d-1}, b_j\in \R$, $j=1,\ldots,\gotm$ and a function $\sigma\in C(\R,\R)$ not mean periodic 
\begin{equation}
\label{propose}
{ k}(x,y|\vartheta)= \sum_{j=1}^{\gotm} c_j \sigma(\lan x, w_j\ran+b_j) \sigma(\lan y, w_j\ran+b_j), \quad x,y\in \R^n.
\end{equation}
In order to emphasize  the connection kernels $k$ with parameters $\vartheta$,  we use the notion:
\[
G_N= G_{N,\vartheta}=[{ k}(x_i,x_j|\vartheta)]_{1\leq i,j\leq N}
 \]
Our first proposition on estimation of the best parameters is the following. Given a number $m$, a sample $z$, and $\lambda>0$
we find parameters $\vartheta$ such that
\begin{equation}
\label{optymal}
\min_{\vartheta} \calL_N(f_{\lambda,z})=\min_{\vartheta}
\lambda\big( y'  (G_{N,\vartheta}+\lambda N I)^{-1}y\big).
\end{equation}

Second approach is possible for given  $L$. Let $\lambda_{k,\vartheta}, k=1,\ldots,N$ be a sequence of eigenvalues of 
$G_{N,\theta}$, $\lambda_{k,\theta}\geq  \lambda_{k+1,\theta}$. Let ${\lambda}N> \lambda_{1,\theta}$. A criterion 
for optimal $\vartheta$
\begin{equation}
\label{Newman}
\min_{\vartheta} \frac{1}{N} y'U_\theta' \left( \sum_{j=0}^L (-1)^j \left( \frac{1}{\lambda N}\right)^j \Lambda^j_{n,\theta} \right) U_\vartheta y.
\end{equation}
Usually it is assumed in  that this probability measure $\rho_{y|x}$ is supported on an interval, see  \cite{Binev}[In this paper, it is assumed
that this probability measure is supported on an interval].   We can assume that $|y|\leq 1$.

 In next section we propose a modified version of our algorithm.

 \section{ Interpolation of a shallow neural network meets the learning theory}

 In the proof of Theorem \ref{UAT} for a given compact set $K$ and $\varepsilon>0$  we find the $w_j\in \Sp^{d-1},\beta_j\in \R$, $j=1,\ldots, \gotm$ such that 
  functions $\varphi_j(x)=\sigma(\lan x,w_j\ran+\beta_j)$ are linearly independent over $K$, next we look for  $x_i\in K$ such that $\det[\sigma(\lan x_i, w_j\ran+\beta_j)]_{i,j=1}^m\neq 0$.
  In \cite[Theorem 5.1]{Pinkus2} the problem  is inverse. We have $x_i\in K$, $i=1,\ldots,n$  and we look for coefficients $w_i\in \Sp^{d-1}$ and $b_i\in \R$ $i=1,\ldots,n$
  such that $ \det [\varphi_j(x_i)]_{i,j=1}^n\neq 0$. This is a classical interpolation problem. 
 \begin{theorem}
 \label{interp} Let $\sigma \in C(\mathbb{R})$ be a function which is not a polynomial.
For any $n$ distinct points $\{x_i\}_{i=1}^n \subset \mathbb{R}^d$
there exist parameters
$\{w_j\}_{j=1}^n \subset \mathbb{R}^d$ and
$\{\beta_j\}_{j=1}^n \subset \mathbb{R}$
such that
\[
\det\!\big[\sigma(\langle x_i,w_j\rangle+\beta_j)\big]_{1\le i,j\le n}\neq 0 .
\]

In particular, the interpolation problem is solvable: for any
$\{\alpha_i\}_{i=1}^n \subset \mathbb{R}$ there exist coefficients
$\{d_j\}_{j=1}^n \subset \mathbb{R}$ such that
\[
\sum_{j=1}^n d_j \sigma(\langle x_i,w_j\rangle+\beta_j)=\alpha_i,
\quad i=1,\ldots,n .
\]

Furthermore, if $\sigma$ is not mean periodic, then the parameters
$\{w_j\}_{j=1}^n$ may be chosen in the sphere $\mathbb{S}^{d-1}$.
\end{theorem}

 If $\lambda>0$ is small, functional \eqref{functional1} is realized by an interpolation problem.
\begin{proposition}\label{propozycja}
Let $\sigma \in C(\mathbb{R})$ be a function which is not a polynomial.
Let
\[
z = (z_1,\ldots,z_n)=((x_1 , y_1),\ldots , (x_n, y_n))
\]
be a sample. Then there exist parameters $\{w_j\}_{j=1}^n \subset \mathbb{R}^d$,
$c_j>0$, and $\beta_j\in \mathbb{R}$, $j=1,\ldots,n$, such that for the kernel
\[
k(x,y)=\sum_{j=1}^n c_j \sigma(\langle x,w_j\rangle+\beta_j)
\sigma(\langle y,w_j\rangle+\beta_j),
\]
there exist coefficients $\{a_i\}_{i=1}^n$ for which the function
\[
g(x)=\sum_{i=1}^n a_i\, k(x,x_i)
\]
interpolates the data, that is,
\[
g(x_i)=y_i, \quad i=1,\ldots,n.
\]

Furthermore, if $\sigma$ is not mean periodic, then the parameters
$\{w_j\}_{j=1}^n$ may be chosen in the sphere $\mathbb{S}^{d-1}$.
\end{proposition}
\begin{proof}
By Theorem~\ref{interp}, for the sample $(x_i,y_i)_{i=1}^n$ there exists
$\{w_j\}_{j=1}^n\subset \R^d$  and
$\{\beta_j\}_{j=1}^n\subset \R$  such that
\begin{equation}
\label{deter}
\det\big[\sigma(\langle x_i,w_j\rangle+\beta_j)\big]_{1\le i,j\le n}\neq 0 .
\end{equation}
and a function
\[
f(x)=\sum_{j=1}^n d_j \sigma(\langle x,w_j\rangle+\beta_j)
\]
that interpolates the data, i.e.
\[
f(x_i)=y_i, \quad i=1,\ldots,n .
\]

Consider now the kernel
\[
k(x,y)=\sum_{j=1}^n c_j 
\sigma(\langle x,w_j\rangle+\beta_j)
\sigma(\langle y,w_j\rangle+\beta_j),
\]
where $c_j>0$ will be taken later.
We seek coefficients $\{a_i\}_{i=1}^n$ such that
\begin{equation}\label{prop}
\begin{aligned}
f(x)
&=\sum_{i=1}^n a_i k(x,x_i) \\
&=\sum_{j=1}^n c_j \sigma(\langle x,w_j\rangle+\beta_j)
\sum_{i=1}^n a_i \sigma(\langle x_i,w_j\rangle+\beta_j).
\end{aligned}
\end{equation}

Comparing the coefficients in the representation above, we obtain the system
\begin{equation}\label{solution}
d_j
=
c_j \sum_{i=1}^n a_i \sigma(\langle x_i,w_j\rangle+\beta_j),
\quad j=1,\ldots,n .
\end{equation}

By \eqref{deter}
 the system \eqref{solution} has a solution. We can take $c_j$  choosing  $c_j=|d_j|\neq 0$, $j=1,\ldots,n$ or $c_j=1$ for all $j$. Thus we obtain coefficients $a_i$, $i=1,\ldots,n$, satisfying \eqref{solution}, which completes the proof.
\end{proof}
 If a number of observed  $x_i\in K$, $i=1,\ldots,n$, is large we should select  a smaller set multi-parameters depending on $\gotm<n$ that is $\{w_j\}_{j=1}^\gotm\subset \Sp^d$  and
$\{\beta_j\}_{j=1}^\gotm\subset \R$. Even then, the optimization problem \eqref{optymal} still involves a large number of multi-parameters. 
In general there is no good method for
a regression function by fully connected  neural networks. 
 In \cite{Bagirov} they examine $\sigma=ReLu$ and they write  {\it
to compute the estimate for given numbers of linear functions, we have to minimize
\[
\frac{1}{n}\sum_{i=1}^n \left| \left( \max_{k=1,\ldots K}\min_{j=1,\ldots L_k} (\lan w_{k,j},x_i \ran -b_{k,j})\right) -y_i\right|^2.
\]
Unfortunately, we cannot solve this minimization problem exactly in general. The reason is that the function to be minimized
is nonsmooth and nonconvex.}
They obtain finely the optimal rate of estimation which was proved by Stone 1982 but still the cost is huge.  

Our idea is the following. Machine learning \cite{Cucker}, \cite{Smale}, \cite{2021} gives precise the estimation of error of the defect function for fixed Mercer's kernel. We want to a change kernels to obtain more suitable estimation of the regression function itself. Somehow the paper of \cite{Belkin} is encouraging, {\it Fit without fear: remarkable mathematical
phenomena of deep learning through the prism of
interpolation}. So our task is to find numerically the equilibrium between this two approach. 
We propose an adaptive procedure which can help in this situation. 

\subsection{Algorithm}\label{sec:algorithm} 
Let the sample consist of $n$ data points. We divide the sample into $L$ disjoint sub-samples,
each containing $N_l$ data points, $l=1,\ldots,L$, such that
\[
N_1+\cdots+N_L=n.
\]

In the first step, we choose a ridge kernel that interpolates the first sub-sample of size $N_1$
according to Proposition~\ref{propozycja}. We set $\gotm_1=N_1$ and define
\[
k_1(x,y)
=
\sum_{j=1}^{\gotm_1}
c_j \,
\sigma(\langle x,w_j\rangle+\beta_j)\,
\sigma(\langle y,w_j\rangle+\beta_j).
\]

In each subsequent step, we solve the optimization problem~\eqref{optymal}
using a new batch of data of size $N_l$, $l=2,\ldots,L$. Simultaneously, we refine the kernel
by adding new multi-parameters, whose number depends on $N_l$.

Assume that after $l\geq 1$ steps we have constructed a kernel
\[
k_l(x,y)
=
\sum_{j=1}^{\gotm_l}
c_j \,
\sigma(\langle x,w_j\rangle+\beta_j)\,
\sigma(\langle y,w_j\rangle+\beta_j).
\]

At step $l+1$, we define an updated kernel with $\gotm_{l+1}$ multi-parameters,
introducing $\Delta\gotm=\gotm_{l+1}-\gotm_l$ new ones:
\[
k_{l+1}(x,y)
=
k_l(x,y)
+
\sum_{j=\gotm_l+1}^{\gotm_{l+1}}
c_j \,
\sigma(\langle x,w_j\rangle+\beta_j)\,
\sigma(\langle y,w_j\rangle+\beta_j).
\]
\section{Experimental results}

\subsection{Universal family}

Iris is a labeled dataset of $n = 150$ flowers of $k=3$ species, namely \textit{setosa}, \textit{virginica} and \textit{versicolor} (50 each). For each observation we have 4 measurements of length and width, so original feature space  is $X=\R^4$ and $Y= \lbrace 1, 2, 3 \rbrace$. The sample space is $\mathcal{X} =X^n$ and $\mathcal{Y} = Y^n$. As a preprocessing step we shifted and scaled each feature to have zero mean and unit variance across whole sample.

In order to classify single observation into one of three available labels we use one-vs-all strategy, i.e. for each $k=1,2,3$ we create new labels $y_i^* = \mathbf{1}_{\lbrace y_i = k \rbrace}$ and then fit binary classifier $f_k$. Then $i$-th observation is classified by evaluating predictions from all $k$ classifiers and assigning the label corresponding to the classifier with the highest $f_k(x_i)$ value.

In our experiment we choose the following parameters:

\begin{enumerate}
	\item $m = 2$ - number of terms in Mercer kernel approximation, which gives us $m(2+d) = 12$ parameters $\theta = (c, b, w)$ to be estimated
	\item $\lambda = 0.01$ - regularization parameter.
\end{enumerate}

Due to relatively small dataset it was possible to apply Lemma \ref{61} by inverting relevant matrix and minimizing loss function directly. Additionally, we approximated inverse matrix with Neumann series by minimizing \eqref{Newman}. We performed numerical optimization using L-BFGS-B algorithm.

During the experiment we encountered problems with numerical stability of two kinds:

\begin{enumerate}
	\item signularities when inverting matrix $K_{\lambda} = G_N + \lambda n I$, which appears in Lemma \ref{61} and during solving linear system $a = K_{\lambda}^{-1} y$
	\item infinity in Neuman approximation of inverse matrix.
\end{enumerate}

We overcame the first problem by QR decomposition of $K_{\lambda}$ matrix. For Neumann approximation we used $L=5$. Here we observed high variance of accuracy when conducting this experiment repeatedly, suggesting that L-BFGS-B optimization with Neumann approximation is sensitive to random initialization of parameters $\theta$. In all simulations we drew initial parameters independently from the uniform and normal distributions:

$$
\begin{cases}
c_j \sim \mathcal{U}(0,1) \\
b_j \sim \mathcal{N}(0,1) \\
w_{ij} \sim \mathcal{N}(0,1)
\end{cases}
$$

where $i=1,2,3,4$ is dimension of input space $X$ and $j = 1,2$ is number of terms in Mercer kernel approximation of form 1.1.

\begin{figure}[h!]
   \includegraphics[totalheight=5.5cm]{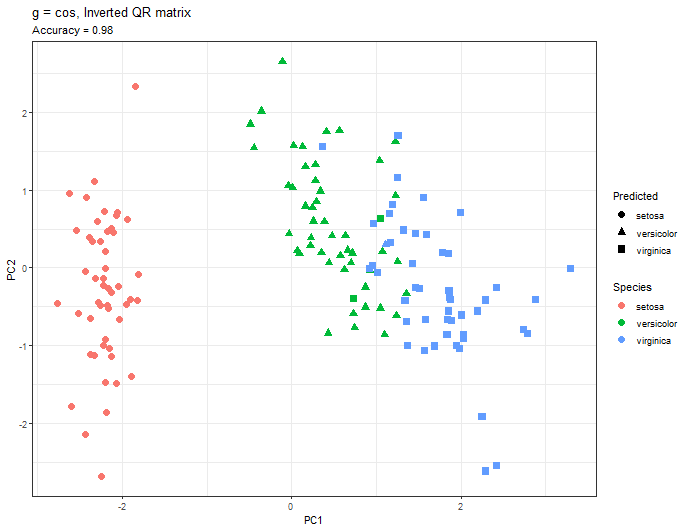}
\end{figure}

\begin{figure}
   \includegraphics[totalheight=5.5cm]{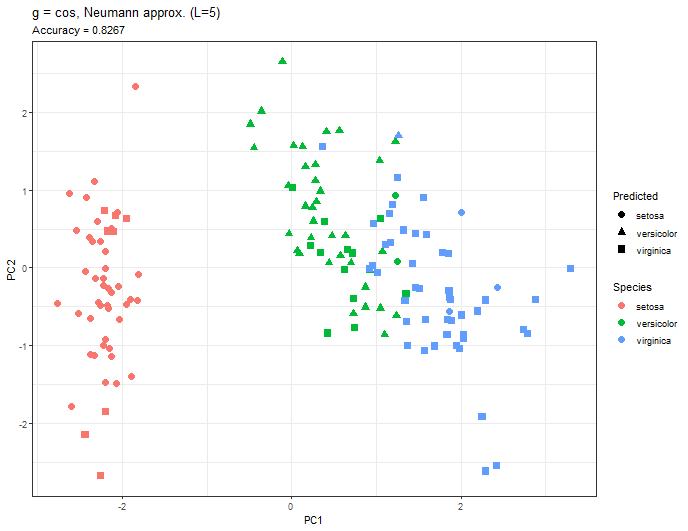}
\end{figure}

\begin{figure}
   \includegraphics[totalheight=5.5cm]{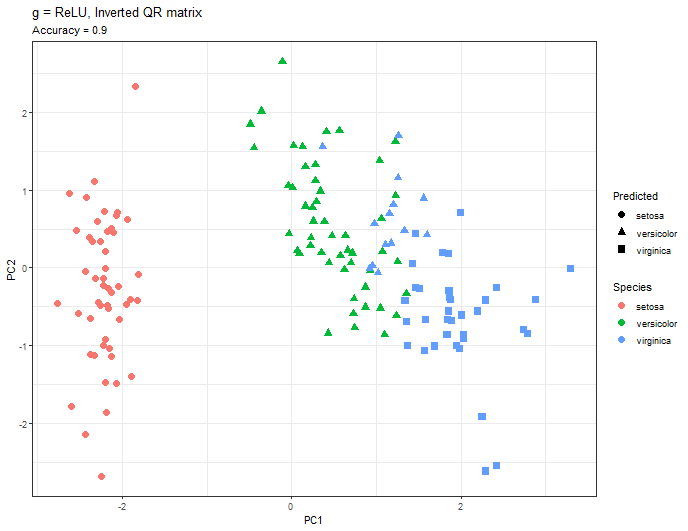}
\end{figure}

Numerical results are presented in Table 1. True labels and predicted labels are presented on Figures 1-3. Shifted and scaled dataset is projected onto spaced spanned by first two principal components. $PC_1$ and $PC_2$ together explain $95.81\%$ of dataset's variance.

\begin{table}[h!]
\begin{tabular}{lll}
\hline
\textbf{Method}         & \textbf{Function} & \textbf{Accuracy}   \\ \hline
Inverted QR matrix      & ReLU              & $0.9$               \\ \hline
Inverted QR matrix      & $\cos$             & $0.98$              \\ \hline
Neumann approx. ($L=5$) & ReLU              & optimization failed \\ \hline
Neumann approx. ($L=5$) & $\cos$             & $0.8267$            \\ \hline
\end{tabular}
\end{table}

\newpage

In this subsection we validate the adaptive algorithm of Section~\ref{sec:algorithm} on a one-dimensional regression problem.  All experiments use the ridge kernel~\eqref{kRR} with the loss functional from Lemma~\ref{61}, solved via QR decomposition. The optimisation problem~\eqref{optymal} is handled by the L-BFGS-B algorithm.

\subsection{Experimental setting}\label{sec:exp_setting}
Let $X = [0,1]$ with marginal distribution $\rho_X = \mathrm{Uniform}(0,1)$. The conditional distribution is $\rho(y\mid x) = f_\rho(x) + \ve$, where $\ve \sim \calN(0,\sigma_\ve)$ with $\sigma_\ve = 0.1$, and the regression function is the third Chebyshev polynomial
mapped to $[0,1]$:
\[
f_\rho(x) = T_3(2x - 1) = 4(2x-1)^3 - 3(2x-1).
\]
A sample of size $n = 100$ is drawn.

\begin{figure}[H]
   \includegraphics[totalheight=5.5cm]{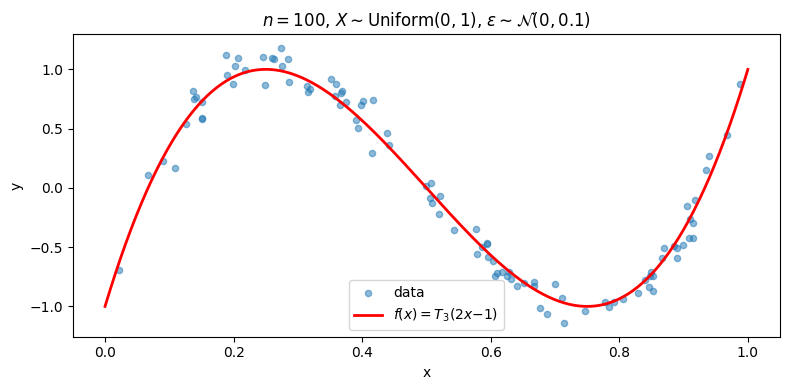}
\end{figure}
 We apply the adaptive procedure of Section~\ref{sec:algorithm} with the parameters listed in Table~\ref{tab:exp_params}.

\begin{table}[h!]
\centering
\caption{Experimental parameters.}
\label{tab:exp_params}
\begin{tabular}{lll}
\hline
\textbf{Symbol} & \textbf{Value} & \textbf{Description} \\ \hline
$n$ & $100$ & Total sample size \\
$N_1$ & $10$ & First sub-sample size (interpolated via Proposition~\ref{propozycja}) \\
$N_l,\; l \ge 2$ & $10$ & Subsequent batch size \\
$\Delta \gotm$ & $2$ & New kernel terms added per step \\
$\lambda$ & $0.01$ & Regularisation parameter \\
$\sigma$ & $\cos$ & Activation function \\ \hline
\end{tabular}
\end{table}

Initial multi-parameters are drawn independently as
$c_j \sim \mathcal{U}(0,1)$, $\beta_j \sim \calN(0,1)$,
$w_{ij} \sim \calN(0,1)$. 

\subsection{Adaptive algorithm}
The algorithm proceeds in $L = 10$ steps.

Step 1 (Interpolation). The first $N_1 = 10$ data points are interpolated exactly by constructing a ridge kernel with $\gotm_1 = N_1 = 10$ terms according to Proposition~\ref{propozycja}. We find $\{w_j\}_{j=1}^{10} \subset \R$, $\{c_j\}_{j=1}^{10}$, $\{\beta_j\}_{j=1}^{10} \subset \R$ such that the kernel
\[
k_1(x, y) = \sum_{j=1}^{\gotm_1} c_j \, \sigma(\lan x, w_j \ran + \beta_j)\,
\sigma(\lan y, w_j \ran + \beta_j)
\]
interpolates the first sub-sample.  In practice, random weight matrices are generated until the Gram matrix $G_{N_1} = [k_1(x_i, x_j)]_{1 \le i,j \le N_1}$ has condition number below $10^{12}$.

Steps $l = 2, \ldots, 10$ (Optimisation). At each step $l$, a new batch of $N_l = 10$ data points is incorporated and $\Delta \gotm = 2$ new kernel multi-parameters are introduced.  Only the new block of parameters is optimised by minimising the loss from Lemma~\ref{61}:
\[
\calL_N(f_{\lambda,z}) = \lambda\, \mathbf{y}^\top
(G_{N,\vartheta} + \lambda N I)^{-1}\, \mathbf{y},
\]
over the new block, keeping all previously fitted blocks fixed.  After all 10 steps the adaptive kernel contains $\gotm = 10 + 9 \times 2 = 28$ terms and uses all $n = 100$ data points.

Table~\ref{tab:adaptive_steps} reports the loss $\calL_N$ at each step. The normalised quantity $\calL_N / N$ is the proper per-point comparison and is expected to decrease (or stabilise) as the kernel improves.

\begin{table}[h!]
\centering
\caption{Adaptive steps for $\sigma = \cos$, $\lambda = 0.01$.}
\label{tab:adaptive_steps}
\begin{tabular}{rrrrll}
\hline
\textbf{Step} & $N$ & $\Delta \gotm$ & $\gotm$ & $\calL_N$ & $\calL_N / N$ \\ \hline
1 & 10 & 10 & 10 & 0.001102 & 0.00011025 \\
2 & 20 & 2 & 12 & 0.004077 & 0.00020387 \\
3 & 30 & 2 & 14 & 0.007997 & 0.00026656 \\
4 & 40 & 2 & 16 & 0.008885 & 0.00022213 \\
5 & 50 & 2 & 18 & 0.008757 & 0.00017515 \\
6 & 60 & 2 & 20 & 0.009078 & 0.00015130 \\
7 & 70 & 2 & 22 & 0.009918 & 0.00014168 \\
8 & 80 & 2 & 24 & 0.009885 & 0.00012356 \\
9 & 90 & 2 & 26 & 0.009250 & 0.00010277 \\
10 & 100 & 2 & 28 & 0.009747 & 0.00009747 \\ \hline
\end{tabular}
\end{table}

The fitted function $\hat{f}_l$ at each of the 10 adaptive steps shows the following behaviour. After Step~1 the kernel interpolates only 10 points, so the fit outside the
first sub-sample is poor.  As subsequent batches and kernel terms are added, the estimator progressively captures the cubic shape of $f_\rho$.  By Step~5 ($n = 50$, $\gotm = 18$) the fit is visually close to the true regression function. Remaining steps refine the tails and reduce residual oscillation.

\subsection{Error convergence} To measure approximation quality we evaluate on a dense grid of 500 equally-spaced points in $[0,1]$ without noise.  This yields the test RMSE $\|f_\rho - \hat{f}\|$, which measures recovery of the true regression function rather than fit to the noisy observations.  Three diagnostics are tracked across adaptive steps:

\begin{enumerate}
\item \textbf{Test RMSE} --- $\bigl(\tfrac{1}{500}\sum_{i=1}^{500}
   (f_\rho(x_i) - \hat{f}(x_i))^2\bigr)^{1/2}$ on the noise-free grid.
\item \textbf{Train RMSE} --- root mean squared residual on the noisy training data.
\item \textbf{Normalised loss} $\calL_N / N$ --- the per-point value of the
   objective~\eqref{optymal}, comparable across steps with different~$N$.
\end{enumerate}

\begin{table}[h!]
\centering
\caption{Error diagnostics at each adaptive step ($\sigma = \cos$, $\lambda = 0.01$).}
\label{tab:error_diag}
\begin{tabular}{rlll}
\hline
\textbf{Step} & \textbf{Test RMSE} & \textbf{Train RMSE} & $\calL_N / N$ \\ \hline
1 & 0.6136 & 0.0311 & 0.00011025 \\
2 & 0.5853 & 0.0635 & 0.00020387 \\
3 & 0.2907 & 0.0889 & 0.00026656 \\
4 & 0.0390 & 0.0936 & 0.00022213 \\
5 & 0.0610 & 0.0927 & 0.00017515 \\
6 & 0.0556 & 0.0945 & 0.00015130 \\
7 & 0.0248 & 0.0987 & 0.00014168 \\
8 & 0.0279 & 0.0986 & 0.00012356 \\
9 & 0.0295 & 0.0955 & 0.00010277 \\
10 & 0.0329 & 0.0978 & 0.00009747 \\ \hline
\end{tabular}
\end{table}

\begin{figure}[H]
   \includegraphics[totalheight=5.5cm]{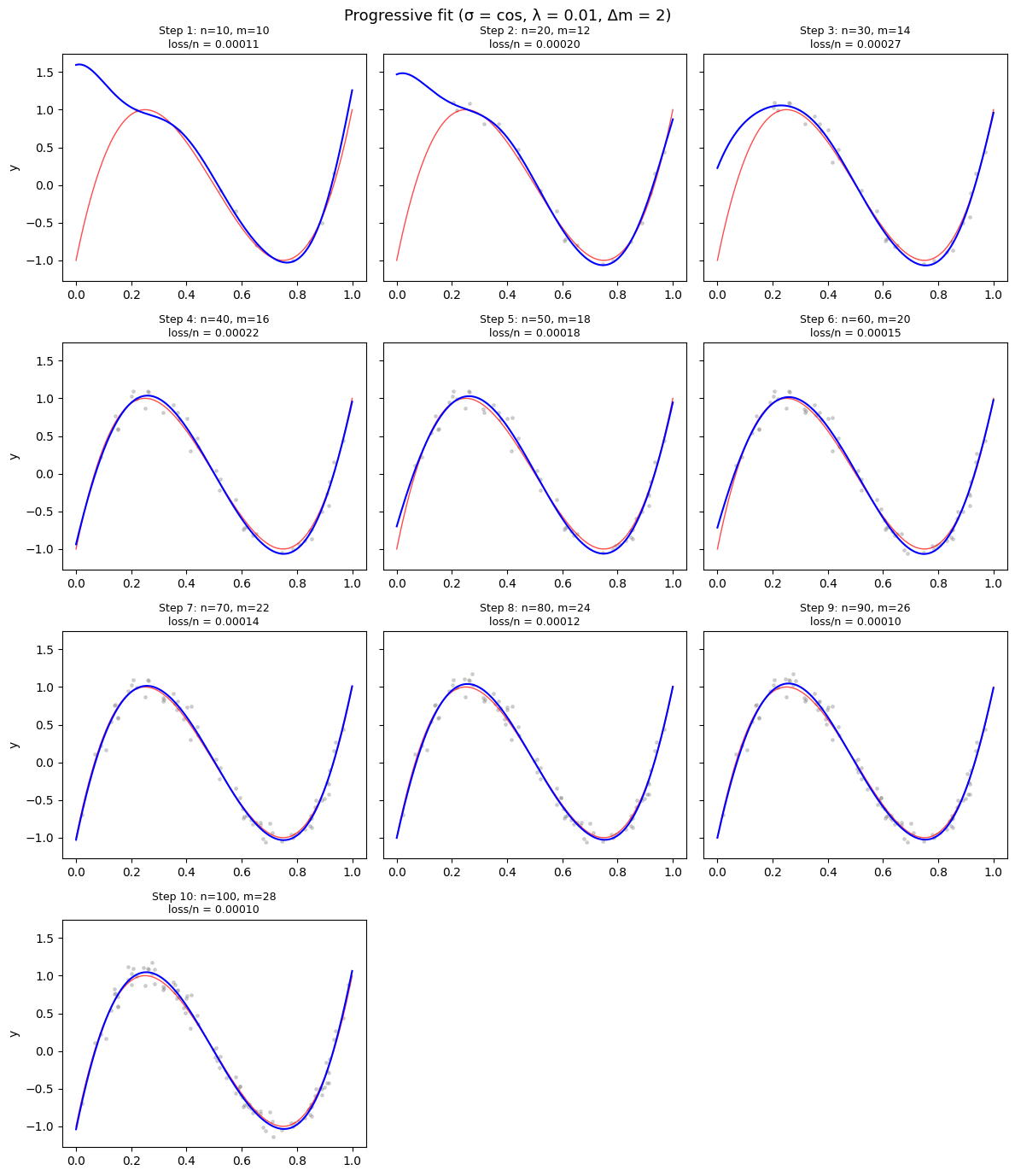}
\end{figure}

The test RMSE drops sharply from $0.61$ at Step~1 to $0.04$ by Step~4 ($n = 40$, $\gotm = 16$) and stabilises
below $0.04$ thereafter, confirming that the adaptive procedure rapidly recovers~$f_\rho$. The train RMSE remains around $0.09$--$0.10$, which is consistent with the noise level $\sigma_\ve = 0.1$. The fact that the test RMSE is substantially lower than the train RMSE reflects that the estimator smooths out the noise and converges
to the true function rather than overfitting the observations.

\subsection{Comparison of activation functions} To investigate the role of the activation function $\sigma$ in the ridge kernel~\eqref{kRR}, we repeat the adaptive experiment with three choices:

\begin{enumerate}
\item $\sigma = \cos$ --- the cosine activation, corresponding to the Rahimi--Recht
   kernel~\cite{Rahimi};
\item $\sigma = \mathrm{ReLU}$ --- the rectified linear unit;
\item $\sigma = \tfrac{1}{2}\cos + \tfrac{1}{2}\,\mathrm{ReLU}$ --- a convex
   combination (homotopy) of the two.
\end{enumerate}

All other parameters ($n$, $N_l$, $\Delta\gotm$, $\lambda$, seed) are held fixed.  By Corollary~\ref{UAT}, each of these activations is not a polynomial, so the corresponding family of ridge kernels is universal in the sense of Definition~\ref{universalfamily}.

\begin{table}[h!]
\centering
\caption{Final test RMSE after 10 adaptive steps for each activation.}
\label{tab:activation_comparison}
\begin{tabular}{lr}
\hline
\textbf{Activation} $\sigma$ & \textbf{Test RMSE} \\ \hline
$\cos$ & 0.0329 \\
$\mathrm{ReLU}$ & 0.1098 \\
$\tfrac{1}{2}\cos + \tfrac{1}{2}\,\mathrm{ReLU}$ & 0.0369 \\ \hline
\end{tabular}
\end{table}

The cosine activation reaches the lowest final test RMSE of $0.033$, closely followed by the homotopy at $0.037$.  The ReLU activation converges more slowly and plateaus at $0.110$.

\subsection{Discussion}

The experiments confirm several theoretical predictions from the preceding sections. The interpolation step reliably produces a well-conditioned Gram matrix for $\gotm_1 = N_1 = 10$ when $\sigma = \cos$ (non-polynomial), validating the constructive proof of Proposition~\ref{propozycja}.

Adding data and kernel terms while keeping previous blocks fixed yields a sharp decrease in test RMSE during the few initial steps. This indicates that the adaptive scheme of Section~\ref{sec:algorithm} quickly reduces the gap between the RKHS estimator and the true regression.

These results demonstrate that the adaptive algorithm of Section~\ref{sec:algorithm}, which bridges interpolation-based kernel construction (Proposition~\ref{propozycja}) with the classical regularised risk minimisation framework of~\eqref{functional}--\eqref{variation}, provides an effective and theoretically grounded approach to nonparametric regression with ridge kernels.

\bibliographystyle{amsplain}

\end{document}